\newcommand{\classical}{\mathsf{classical}}
\newcommand{\debiased}{\mathsf{debiased}}
\renewcommand{\d}{\mathsf{d}}
\newcommand{\p}{\mathsf{p}}
\let\oldnl\nl
\newcommand{\nonl}{\renewcommand{\nl}{\let\nl\oldnl}}
\newcommand{\vct}{\boldsymbol }
\newcommand{\diag}{\mathrm{diag}}
\newcommand{\tr}{\mathrm{tr}}
\def\op{\mathrm{op}}
\renewcommand{\hat}{\widehat}
\renewcommand{\tilde}{\widetilde}
\definecolor{DSgray}{cmyk}{0,1,0,0}
\begin{document}


\RUNAUTHOR{Chen and Wang}

\RUNTITLE{Confidence Intervals for Demand Prediction}

\TITLE{Uncertainty Quantification for Demand Prediction in Contextual Dynamic Pricing}

\ARTICLEAUTHORS{%
\AUTHOR{Yining Wang}
\AFF{Department of Information Systems and Operations Management, Warrington College of Business, University of Florida, Gainesville, FL 32611, USA.}
\AUTHOR{Xi Chen}
\AFF{Department of Information, Operations and Management Sciences, Leonard N.~Stern School of Business, New York University, New York, NY 10012, USA.} 
\AUTHOR{Xiangyu Chang}
\AFF{Department of Information Management and E-Business, School of Management, Xi'an Jiaotong University, China.}
\AUTHOR{Dongdong Ge}
\AFF{Shanghai University of Finance and Economics, China}
} 
\ABSTRACT{
Data-driven sequential decision has found a wide range of applications in modern operations management, such as dynamic pricing, inventory control, and assortment optimization. Most existing research on data-driven sequential decision focuses on designing an online policy to maximize the revenue. However, the research on uncertainty quantification on the underlying true model function (e.g., demand function), a critical problem for practitioners, has not been well explored. In this paper, using the problem of demand function prediction in dynamic pricing as the motivating example,  we study the problem of constructing accurate confidence intervals for the demand function. The main challenge is that sequentially collected data leads to significant distributional bias in the maximum likelihood estimator or the empirical risk minimization estimate, making classical statistics approaches such as the Wald's test no longer valid. We address this challenge by developing a debiased approach and provide the asymptotic normality guarantee of the debiased estimator.  Based this the debiased estimator, we provide both point-wise  and uniform confidence  intervals of the demand function.}


\KEYWORDS{Adaptive data, Asymptotic normality, Confidence interval,  Dynamic pricing, Data-driven sequential decision} \HISTORY{}

\maketitle

%


\section{Introduction}


In recent years, data-driven sequential decision-making has received a lot of attentions and finds a wide range of applications in operations management, such as dynamic inventory control (see, e.g., \cite{huh2011adaptive,chen2008dynamic,CCS2019,CCA2019,LJS2019}), dynamic pricing (see, e.g., \cite{BZ2009,BZ2015,WDY2014,CJD2018,broder2012dynamic}), dynamic assortment optimization (see, e.g.,  \cite{rusmevichientong2012robust,Saure2013,Agrawal16MNLBandit,wang2018near,chen2018dynamic}). Take the personalized/contextual dynamic pricing as an example; it is usually assumed that the underlying demand, which is a function of the price and customer's contextual information, follows a certain probabilistic model with unknown parameters. Over a finite time selling horizon of length $T$, at each time period, one customer arrives. The seller observes the characteristic of the customer and makes the price decision. Then the arriving customer makes the purchase decision based on the posted price. The seller will observe the purchase decision, update her knowledge about the demand model, and might change the price policy accordingly for future customers. The key challenge in dynamic pricing is to accurately estimate the underlying model parameter in demand function, which will then be used to determine prices later on. Existing literature on dynamic pricing only constructs a point estimator of the underlying model parameter, i.e., estimating the parameter by a single number or a vector,  without quantifying the uncertainty in the estimator.  Uncertainty quantification is very useful for practitioners. It is highly desirable for the seller to obtain confidence intervals of the underlying demand function, which is guaranteed to cover the true demand function with $1-\alpha$ probability (also known as the confidence level, e.g., $\alpha=0.05$).

Although construction of confidence interval has been a classical topic in statistics \citep{Stigler:02}, the existing results in statistical literature mainly deal with independent and non-adaptive data. The behavior of sequentially collected data is quite different from independent  data.
In particular, in the (contextual) dynamic pricing problem both the decision (e.g., the price) and the collected customers' contextual information at each time period are adaptive, which heavily correlate with information obtained in previous periods. Due to the sequential dependence, estimators computed from adaptively collected data might have severe \emph{distributional bias} even when the sample size goes to infinity \citep{deshpande2018accurate,deshpande2019online}. Such a bias makes the classical approach of constructing confidence intervals (e.g., Wald's test, see Chapter 17 of \cite{Keener:10}) no longer valid.

The main goal of our paper is to construct a debiased estimator that is asymptotically normal centered at the true model parameter with a simple covariance matrix structure. Based on the proposed debiased estimator, we construct both \emph{point-wise confidence intervals} (i.e., confidence intervals valid for any given decision variable (price) and contextual information)  and \emph{uniform confidence intervals} (i.e., confidence intervals \emph{uniformly} valid for all decision variables and contextual information).  
To highlight our main idea,  we will consider the problem of constructing confidence intervals for demand function in dynamic pricing, which is one of the most important data-driven sequential decision problems in revenue management.

In particular, we study a stylized personalized dynamic pricing model in which there are $T$ selling periods.
At each selling period $t\in\{1,\cdots,T\}$, a potential customer comes with an observable personal \emph{context} vector
$x_t$. {Instead of assuming $x_t$ are independent across time periods as in existing literature (e.g., \cite{Chen:2015revenue,Miao:19}), we allow $x_t$ to depend on information from previous selling periods. This is a more practical scenario since a customer's contextual information might be heavily correlated with previous prices and realized demands. For example, a consecutive time periods of posted lower price or higher demands will attract new customers from a different population, whose contextual information will be different from the previous customers.}  By observing the contextual information $x_t$ of the arriving customer, the seller decides the price $p_t$ and the customer decides on a realized demand.
We assume the demand of the arriving customer follows a general probabilistic model,
\begin{equation}
d_t = f(p_t,x_t;\theta_0) + \xi_t,
\label{eq:edt-model}
\end{equation}
where $f$ is a parametric function parameterized by $\theta_0$ with a known form (e.g., linear or logistic),
$\theta_0\in\mathbb R^d$ is an \emph{unknown} parameter vector that models the demand behaviors,
and $\xi_t$ are zero-mean, conditionally independent (conditioning on $p_t$ and $x_t$) noise variables.
A typical objective of the retailer is to maximize his/her expected revenue, or more specifically
$$
\max_{p_t\in[p_{\min},p_{\max}]} p_tf(p_t,x_t;\theta_0), 
$$
without knowing the model $\theta_0$ a priori. In this paper, our goal is to construct confidence intervals for both the true model parameter $\theta_0$ and the underlying demand function $f$ (see the definition in Sec.~\ref{sec:contribution}).

The demand model in Eq.~(\ref{eq:edt-model}) is very general and covers two widely used demand models: the \emph{linear} model and the
\emph{logistic} model.
In the linear model, $d_t$ is modeled as
\begin{equation}
d_t = \langle\phi(p_t,x_t),\theta_0\rangle + \xi_t,
\label{eq:model-linear}
\end{equation}
where $\phi:(p_t,x_t)\mapsto\phi_t\in\mathbb R^d$ is a known feature map  for the price and contextual information, and $\xi_t\sim\mathcal N(0,\nu^2)$ are noise variables.
In the logistic regression model, $d_t\in\{0,1\}$ is a binary demand realized according to the logistic model
\begin{equation}\label{eq:logistic}
P[d_t=1|p_t,x_t,\theta] = \frac{\exp\{\langle\phi(p_t,x_t),\theta_0\rangle\}}{1+\exp\{\langle\phi(p_t,x_t),\theta_0\rangle\}}.
\end{equation}
{For example, \cite{Qiang:16} and \cite{Miao:19} consider a special case of the feature map,  where $\phi(p,x)=(p,x)$ is the concatenation of the price $p$ and the contextual vector $x$.}

In contextual dynamic pricing models, two dependency or feedback structures are essential to model the pricing dynamics in practice.
The first feedback structure is that the retailer, after observing a sequence of customers' purchasing activities,
could leverage his/her knowledge or estimates of the unknown model $\theta_0$ to offer more profitable pricing decisions.
In other words, the \emph{prices} sequentially decided by the retailer are \emph{statistically correlated} with the purchasing activities of prior customers.
The second feedback structure involves the types (reflected in context vectors $\{x_t\}$) of customers arriving, which could well depend
on the historical prices (e.g., a consistent high price offering might attract more affluent customers) and the {realized demands  in previous selling periods}.
Hence, the \emph{context vectors} $x_t$ are \emph{statistically correlated} with the {prices and demands} in previous time periods.

Now we rigorously formulate the above-mentioned feedback structures. A contextual dynamic pricing model can be written as $\mathcal M=(T,\theta_0,\phi,p_{\min},p_{\max},\mathcal C)$,
where $T$ is the time horizon, $\theta_0\in\mathbb R^d$ is the unknown regression model, $\phi$ is the feature map,
$[p_{\min},p_{\max}]$ is the price range,
and $\mathcal C=(\mathcal C_1,\cdots,\mathcal C_T)$ characterizes the underlying context generation procedure,
such that $x_t = \mathcal C_t(x_1,p_1,d_1,\cdots,x_{t-1},p_{t-1},d_{t-1},U)$, where $U$ is a certain random variable.
A contextual dynamic pricing algorithm/strategy over $T$ time periods can be written as $\mathcal A=(a_1,a_2,\cdots,a_T)$,
where $a_t: (x_1,p_1,d_1,\cdots,x_{t-1},p_{t-1},d_{t-1},x_t,U')\mapsto p_t$
is a function mapping from the history of prior selling periods to the offered price $p_t\in[p_{\min},p_{\max}]$ for incoming customer at time $t$.
$U'$ here is another random variable.
The functions $\mathcal C$ and $\mathcal A$ capture the two feedback structures mentioned in the previous paragraph, where
 both $p_t$ and $x_t$ are statistically correlated with $p_{t'},x_{t'},d_{t'}$ in prior selling periods $t'<t$.


\subsection{Our contribution: uncertainty quantification in sequentially collected data}
\label{sec:contribution}

The main objective of this paper is {to quantify the uncertainty for the learned demand function
from purchase data} on dynamically, adaptively chosen prices and contexts. Namely, we will construct two types of confidence intervals of the underlying demand function $f$, point-wise confidence intervals and uniform confidence intervals, which are introduced as follows.

For a pre-specified confidence level $1-\alpha$
at the end of $T$ time periods, where $\alpha \in (0,1)$ is usually a small constant such as 0.1 or 0.05, our goal is to construct upper and lower confidence interval edges $\ell_\alpha(p,x)$, $u_\alpha(p,x)$,
such that for {any given price $p$, context $x$, and $\theta_0$},
\begin{equation}
\lim_{T\to\infty} \Pr\left[\ell_\alpha(p,x)\leq f(p,x;\theta_0)\leq u_\alpha(p,x)\right] = 1-\alpha.
\label{eq:defn-ci}
\end{equation}
The confidence interval in \eqref{eq:defn-ci} is known as the \emph{point-wise} confidence interval since it holds for a fixed price $p$ and context vector $x$.


In many applications, we are also interested in
confidence intervals $L_\alpha(\cdot,\cdot),U_\alpha(\cdot,\cdot)$ with \emph{uniform} coverage.
More specifically, for a pre-determined confidence level $1-\alpha$, $\alpha\in(0,1)$, $L_\alpha,U_\alpha$ satisfy for all $\theta_0$ that
\begin{equation}
\lim_{T\to\infty}\Pr\left[\forall p\in[p_{\min},p_{\max}], \forall x\in\mathcal X, L_\alpha(p,x)\leq f(p,x;\theta_0)\leq U_\alpha(p,x)\right] =1- \alpha,
\label{eq:defn-sup-ci}
\end{equation}
where $\mathcal X$ is a certain compact subset of $\mathbb R^d$ as the domain of all context vectors.

{To construct these confidence intervals, we also provide the confidence interval of the model true parameter $\theta_0$, which might have its own independent interest in practice.

As we mentioned, the main difficulty in constructing these confidence intervals lies in the two dependency structures of the price and contexts. Therefore, in contrast to the non-adaptive case where the maximum likelihood estimator (MLE) is unbiased, the MLE based on the adaptive data will have a significant distributional bias. In the next subsection, we briefly discuss two popular contextual dynamic pricing algorithms in the literature to better illustrate the adaptive data collection process.
We also explain in Sec.~\ref{sec:wald} why the classical construction of confidence intervals fails in our problem.
}

\subsection{Online policies for contextual dynamic pricing}\label{sec:popular-algorithm}

We mention two popular online policies for the contextual dynamic pricing problem. 

\paragraph{The $\varepsilon$-greedy policy.}
An $\varepsilon$-greedy policy \citep{watkins1989learning} has a parameter $\varepsilon\in(0,1)$ to balance
the tradeoff between exploration and exploitation.
At each selling period $t\in[T]$, with probability $\varepsilon$, a price $p_t\in[p_{\min},p_{\min}]$
is selected uniformly at random for exploration.
With probability $1-\varepsilon$, the exploitation price $p_t=\arg\max_{p\in[p_{\min},p_{\max}]}pf(p,x_t;\hat\theta_{t-1})$
is set based on the current estimate $\hat\theta_{t-1}$:
\begin{equation}
\hat\theta_{t-1} = \arg\min_{\theta\in\mathbb R^d} \sum_{t'\leq t-1} \rho(d_t,p_t,x_t;\theta) + \lambda\|\theta\|_2^2,
\label{eq:ridge-erm}
\end{equation}
which is the regularized empirical-risk minimization (ERM) using sales data from prior selling episodes.
Here $\rho$ is a certain risk function depending on the particular class of the underlying demand model $f$.
For example, for the linear demand model, the least-squares function is commonly used:
$$\rho(d_t,p_t,x_t;\theta) = (d_t-\langle\phi(p_t,x_t),\theta\rangle)^2.$$
For the logistic demand model, the negative log-likelihood function is often adopted,
$$\rho(d_t,p_t,x_t;\theta) = -d_t\log f(p_t,x_t;\theta) - (1-d_t)\log(1-f(p_t,x_t;\theta)).$$
{A common choice of $\rho$ would be the negative log-likelihood function.} In principle, the risk function $\rho$ should be selected such that the underlying true model $\theta_0$ minimizes the $\rho$ function in expectation. Detailed assumptions on $\rho$ will be given in Sec.~\ref{sec:assumption}.

\paragraph{The Upper-Confidence Bound (UCB) policy.}
In the UCB policy (or more specifically the LinUCB policy for linear or generalized linear contextual bandits \citep{rusmevichientong2010linearly,filippi2010parametric,abbasi2012online}),
a regularized MLE $\hat\theta_{t-1}$ is calculated for \emph{every} selling period {in \eqref{eq:ridge-erm}}.
Afterwards, an offered price $p_t$ is selected to maximize an \emph{upper bound} of the demand function $f$,
or more specifically
\begin{equation}
p_t = \arg\max_{p\in[p_{\min},p_{\max}]} p\times\max\big\{1, f(p,x_t;\hat\theta_{t-1}) + \mathrm{CI}_t(p,x_t)\big\},
\label{eq:linucb}
\end{equation}
where $\mathrm{CI}_t(\cdot,\cdot)$ is a certain form of {confidence bound} such that with high probability
$f(p,x;\hat\theta_{t-1})+\mathrm{CI}_t(p,x)\geq f(p,x;\theta_0)$
for all $p$ and $x$, where $\theta_0$ is the underlying true model parameter.
We refer the readers to the works of \cite{abbasi2012online,rusmevichientong2010linearly,filippi2010parametric}
for the different variants of $\mathrm{CI}_t(\cdot,\cdot)$ forms in linear and generalized linear contextual bandits.

{While the UCB policy naturally constructs ``upper confidence bounds'',
such constructed confidence bounds are inadequate for the use of predicting reasonable demand ranges because the upper confidence bound gives too wide intervals to be useful. In fact, confidence bounds in UCB are constructed using concentration inequalities, in which the constants are far from tight}.
Given the pre-specified confidence level $1-\alpha$, our goal is to construct demand confidence intervals that have \emph{statistically accurate coverage} as defined in \eqref{eq:defn-ci} and \eqref{eq:defn-sup-ci}, allowing potential users to
understand exactly the range of expected demands at certain confidence levels.


\subsection{Related works}

Data-driven sequential decision-making has been extensively studied for revenue and inventory management problems
with unknown or changing environments. In most existing literature, effective online policies are developed to maximize  revenues. {However,  how to provide accurate confidence intervals for the key underlying probabilistic model parameters (e.g., demand function or utility parameters) have not been well-explored in the literature. }
Recently, the work of \cite{ban2020confidence} considered the construction of confidence intervals (for the demand functions)
in an inventory control model.
Compared to approaches proposed in this paper, the work of \cite{ban2020confidence} derives asymptotic normality of certain SAA strategies,
while our approach \emph{de-biases} general empirical-risk minimizers so that the constructed confidence intervals
are applicable to a wide range of {online policies}, such as $\varepsilon$-greedy, upper confidence bounds or Thompson sampling. 
Technically, the limiting distributions in \cite{ban2020confidence} were established using Stein's methods,
while our proposed approach is inspired by the one-step estimators in asymptotic statistics \citep{van2000asymptotic}.


Recently, the de-biased estimator has been extensively investigated  in high-dimensional penalized estimators \citep{van2014asymptotically,zhang2014confidence,javanmard2014confidence,wang2019rate} since the regularization (e.g., $\ell_1$-penalty in Lasso \citep{Tibshirani:96}) leads to the bias in the estimator.  However, these works only deal with non-adaptively collected data and thus cannot be applied to our setting. The recent works of \cite{deshpande2018accurate,deshpande2019online} applied the de-biasing approach to confidence intervals
of adaptively collected data, including multi-armed and linear contextual bandit problems.
While the works of \cite{deshpande2018accurate,deshpande2019online} mainly focus on linear models,
this paper provides confidence intervals for \emph{general parametric models} $f(p,x;\theta)$.
The extension to general parametric model classes poses some unique technical challenges,
such as the sequential estimation of Fisher's information matrix. Further details are given in our Sec.~\ref{sec:main}.

\subsection{Notations and paper organization}

Throughout this paper we adopt the following asymptotic notations.
For sequences $\{a_n\}$ and $\{b_n\}$, we write $a_n=O(b_n)$ or $b_n=\Omega(a_n)$ if $\limsup_{n\to\infty}|a_n|/|b_n|<\infty$;
we write $a_n=o(b_n)$ or $b_n=\omega(a_n)$ if $\lim_{n\to\infty} |a_n|/|b_n|=0$.

The rest of the paper is organized as follows:
in Sec.~\ref{sec:assumption} we list the assumptions made in this paper, including discussion on why the imposed assumptions are useful and relevant;
in Sec.~\ref{sec:wald} we review the classical approach of \emph{Wald's intervals} for constructing confidence intervals,
and explain why such a classical approach fails in contextual dynamic pricing problems;
in Sec.~\ref{sec:main} we propose the de-biased approach and demonstrate, through both theoretical and empirical analysis,
that our proposed confidence intervals are accurate in dynamic pricing.
Finally, in Sec.~\ref{sec:conclusion} we conclude the paper by mentioning several future directions for research.
Proofs of some technical lemmas are deferred to the supplementary material.


\section{Models and Assumptions}\label{sec:assumption}

In this section we state assumptions that will be imposed throughout of this paper.
Most of the assumptions are standard in the literature of dynamic pricing or contextual bandits.
There are however a few additional assumptions for the specific purposes of building accurate confidence intervals,
which are often made in statistical literature.

\subsection{Assumptions on the demand model $f$} We first list assumptions on the underlying demand function $f$ (i.e., the mean of the demand),
as well as assumptions on the underlying true parameter $\theta_0$.
\begin{enumerate}
\item[(A1)] For $t=1,\ldots, T$, $p_t\in[p_{\min},p_{\max}]$ and  $x_t\in\mathcal X\subseteq\mathbb R^d$ for some compact $\mathcal X$,
and $\theta_0\in\Theta\subseteq\mathbb R^d$ for some known compact parameter class $\Theta$;
\item[(A2)] The demand function $f$ is continuously differentiable with respect to $\theta$, and furthermore
$f(p,x;\theta),\|\nabla_\theta f(p,x;\theta)\|_2 < \infty$ for all $p,x$ and $\theta$;
\end{enumerate}

Assumptions (A1) and (A2) assert that both the context vectors $\{x_t\}$ and the unknown model parameter $\theta_0$ are \emph{bounded},
and furthermore the known demand function $f$ satisfies basic smoothness properties.
{This assumption implies that the expected demands $\mathbb E[d_t|x_t,p_t;\theta]$ are bounded and cannot be arbitrarily large.}
The two examples $f(p,x;\theta)=\langle\phi(p,x),\theta\rangle$ (linear regression model)
and $f(p,x;\theta) = \exp\{\langle\phi(p,x),\theta\rangle\}/(1+\exp\{\langle\phi(p,x),\theta\rangle\})$ (logistic regression model)
satisfy both conditions, provided that the feature map $\phi(p,x)$ is bounded.

\subsection{Assumptions on the noise variables $\{\xi_t\}$}
Recall that the noise variable $\xi_t$ is defined as
\begin{equation}\label{eq:xi}
\xi_t := d_t - \mathbb E[d_t|x_t,p_t;\theta_0] = d_t-f(p_t,x_t;\theta_0),
\end{equation}
which is the difference between the realized demand and its (conditional) expectation.
We list assumptions on the noise variables $\{\xi_t\}_{t=1}^T$ across the $T$ selling periods.
\begin{enumerate}
\item[(B1)] $\{\xi_t\}_{t=1}^T$ are independent, centered and bounded sub-Gaussian random variables;
\item[(B2)] There exists a {known} variance function $\nu(\cdot,\cdot;\theta)$ such that
\begin{equation}\label{eq:var}
\mathbb E[\xi_t^2|p_t,x_t] = \nu(p_t,x_t;\theta_0)^2,
\end{equation}
$\nu(p,x,\theta)<\infty$ for all $p,x$, $\theta\in\Theta$ and Lipschitz continuous with respect to $\theta$;
$0<\inf_{p,x}\nu(p,x;\theta_0)\leq \sup_{p,x}\nu(p,x;\theta_0)<\infty$.
\end{enumerate}

In the above assumptions, (B1) is a standard assumption that the noise variables are all centered and sub-Gaussian with light tails,
conditioned on the offered price $p_t$ and the context vector $x_t$.
(B2) imposes further assumptions on the \emph{variance} of the noise variables.
In particular, it assumes that the conditional variance of $\xi_t$ (conditioned on $p_t$ and $x_t$) is bounded, never zero, and smooth.
Such an assumption is useful in demand models $f$ which are inherently heteroscedastic.
For example, in the logistic demand model where $d_t\in\{0,1\}$ is a Bernoulli variable with $\Pr[d_t=1|p_t,x_t;\theta] = f(p_t,x_t;\theta)=\exp\{\langle\phi(p,x),\theta\rangle\}/(1+\exp\{\langle\phi(p,x),\theta\rangle\})$,
it is easy to verify that $\nu^2(p_t,x_t;\theta) = \exp\{\langle\phi(p,x),\theta\rangle\}/(1+\exp\{\langle\phi(p,x),\theta\rangle\})^2$,
and all conditions in Assumption (B2) hold true.

\subsection{Assumptions on the risk function $\rho$}
The empirical risk minimization problem in Eq.~(\ref{eq:ridge-erm}) is the workhorse of our model estimates $\hat\theta$.
As discussed, popular risk functions $\rho$ include the least-squares loss function $\rho(d,p,x;\theta) = (d-f(p,x;\theta))^2$
and the negative log-likelihood function $\rho(d,p,x;\theta) = -\log P(d|p,x;\theta)$.
Below we give a list of assumptions imposed on the risk function $\rho$ so that the ERM estimates satisfy desired properties.

\begin{enumerate}
\item[(C1)] The risk function $\rho$ is three times continuously differentiable with respect to $\theta$, and furthermore
$|\rho(d,p,x;\theta)|, \|\nabla_\theta\rho(d,p,x;\theta)\|_2, \|\nabla^2_{\theta\theta}\rho(d,p,x;\theta)\|_{\mathrm{op}},
\|\nabla^3_{\theta\theta\theta}\rho(d,p,x;\theta)\|_{\mathrm{op}} <\infty$
for all $d,p,x$ and $\theta$;
\item[(C2)] For all $p,x$, $\mathbb E_{d\sim p(\cdot|p,x,\theta_0)}[\nabla_\theta\rho(p,x;\theta_0)] = 0$;
\end{enumerate}

Here in Assumption (C1), $\nabla^3_{\theta\theta\theta}\rho$ is a symmetric $d\times d\times d$ tensor,
and its operator norm $\|\nabla^3_{\theta\theta\theta}\rho\|_\op$ is defined as
$\|\nabla^3_{\theta\theta\theta}\rho\|_\op = \sup_{\|z\|_2\leq 1}|[\nabla^3_{\theta\theta\theta}\rho](z,z,z)]|
= \sup_{\|z\|_2\leq 1}|\sum_{i,j,k=1}^d(\frac{\partial^3}{\partial z_i\partial z_j\partial z_k}\rho)z_iz_jz_k|$.
For the linear demand model and least-squares losses $\rho(d,p,x;\theta)=(d-\langle\phi(p,x),\theta\rangle)^2$,
Assumption (C1) is implied by the boundedness of $\phi(p,x)$;
for other parametric models (e.g., the logistic regression model) and the negative log-likelihood loss $\rho(d,p,x;\theta)=-\log P(d|p,x;\theta)$,
Assumption (C1) are standard conditions used in the analysis of \emph{maximum likelihood estimator.}
Finally, Assumption (C2) means that the true model parameter $\theta_0$ is a stationary point of the loss function $\rho$,
which is satisfied by both the least-squares loss function and the negative log-likelihood loss function. In statistical literature,
$\nabla_\theta \rho = -\nabla_\theta\log P$ is known as (the negative of) the \emph{score function},
whose expectation is zero under $\theta_0$.

\subsection{Assumptions on the contextual pricing model $\mathcal M$}

At last, we state an assumption on the behavior of the contexts $\{x_t\}_{t=1}^T$ under the contextual pricing model $\mathcal M$.

\begin{enumerate}
\item[(D1)] There exists a positive constant $\kappa_0>0$ such that,
for any selling period $t$ and filtration $\mathcal F_{t-1}=\{(x_{t'},p_{t'},d_{t'})\}_{t'<t}$,
it holds that $\lambda_{\min}(\mathbb E_{x_t\sim\mathcal C_t(\mathcal F_{t-1})}[\nabla_{\theta}f(d,p,x_t;\theta)\nabla_\theta f(d,p,x_t;\theta)^\top]) \geq \kappa_0$
and $\lambda_{\min}(\mathbb E_{x_t\sim\mathcal C_t(\mathcal F_{t-1})}[\nabla^2_{\theta\theta^\top}\rho(d,p,x_t;\theta)]) \geq \kappa_0$
for all $d,p$ and $\theta$, which could potentially depend on $x_t$.
\end{enumerate}

Assumption (D1) concerns two quantities: the (expected) outer product of demand gradients $\nabla_\theta f\nabla_\theta f^\top$,
which by definition is always positive semi-definite,
and the (expected) Hessian of the loss function $\nabla^2_{\theta\theta^\top}\rho$, which can theoretically be any symmetric matrix
but is in general positive semi-definite for common loss functions like the least squares or negative log-likelihoods.
Assumption (D1) then assumes, essentially, that both quantities $\mathbb E[\nabla_\theta f\nabla_\theta f^\top]$ and $\mathbb E[\nabla^2_{\theta\theta^\top}\rho]$
are positive definite in a ``strict'' sense, by lower bounding the least eigenvalues of both $\mathbb E[\nabla_\theta f\nabla_\theta f^\top]$  and $\mathbb E[\nabla^2_{\theta\theta^\top}\rho]$ by a positive constant $\kappa_0$.
Since both expectations are conditioned upon the adaptively chosen prices $\{p_t\}$ and context vectors $\{x_t\}$,
in Assumption (D1) we assume that the lower bound on the smallest eigenvalues holds for any such chosen prices/contexts in prior selling periods.
Finally, we remark that the exact value of $\kappa_0$ does \emph{not} need to be known, as it is only used in the theoretical analysis
of the validity of confidence intervals constructed by our proposed algorithm.

\section{Limitation of Classical Wald's Intervals}\label{sec:wald}

In classical parametric statistics with i.i.d.~data points, the \emph{Wald's interval} is a standard approach towards
building asymptotic estimation or confidence intervals on maximum likelihood estimates.
In this section, we review the approach of Wald's interval in the context of contextual dynamic pricing,
and discuss why such a classical method \emph{cannot} be directly applied because of the feedback structures presented in our problem.

Suppose after $T$ selling periods the offered prices, purchase activities and customers' context vectors are $\{(p_t,d_t,x_t)\}_{t=1}^T$.
Let $\hat\theta$ be the maximum likelihood estimate
\begin{equation}
\hat\theta = \arg\min_{\theta} -\sum_{t=1}^T \log P(d_t|x_t,p_t;\theta),
\label{eq:full-mle}
\end{equation}
which is equivalent to Eq.~(\ref{eq:ridge-erm}) with $\lambda=0$ and $\rho(d_t,x_t,p_t;\theta) = -\log P(d_t|x_t,p_t;\theta)$.
Using classical statistics theory (see, e.g., \cite{van2000asymptotic}),
 \emph{if $(d_t,x_t,p_t)$ are statistically independent}, then under mild regularity conditions it holds that
\begin{equation}
[ \hat I_T(\hat\theta)]^{1/2}(\hat\theta-\theta_0) \overset{d}{\to} \mathcal N(0, I_{d\times d}), \quad \text{as} \;\;  T\to\infty,
\label{eq:ci-classical}
\end{equation}
where $\hat I_T(\hat\theta) = -\sum_{t=1}^T\nabla^2_{\theta\theta^T}\log P(d_t|p_t,x_t;\hat\theta)$ is the sample Fisher's information matrix.
With Eq.~(\ref{eq:ci-classical}), using the Delta's method
\footnote{The delta's method asserts that if $\sqrt{n}(X_n-\beta)\overset{d}{\to}\mathcal N(0,\Sigma)$ then
$\sqrt{n}(g(X_n)-g(\beta))\overset{d}{\to}\mathcal N(0, \nabla g(\beta)^\top\Sigma \nabla g(\beta))$.
See for example the reference of \cite{van2000asymptotic}.}
 we have for fixed $p,x$ that
\begin{equation}
f(p,x;\hat\theta)-f(p,x;\theta_0) \overset{d}{\to} \mathcal N(0, \hat\sigma_{px}^2) \;\;\;\;\;\;\text{where}\;\;
\hat\sigma_{px}^2 = {\nabla_\theta f(p,x;\hat\theta)[\hat I_T(\hat\theta)]^{-1}\nabla_\theta f(p,x;\hat\theta)}.
\label{eq:confidence-classical}
\end{equation}
A confidence interval on $f(p,x;\theta_0)$ can then be constructed as
\begin{equation}
\ell_\alpha^{\classical}(p,x) = f(p,x;\hat\theta) - z_{\alpha/2}\hat\sigma_{px}, \;\;\;\;\;\;
u_\alpha^{\classical}(p,x) = f(p,x;\hat\theta) + z_{\alpha/2}\hat\sigma_{px},
\label{eq:confidence-classical-defn}
\end{equation}
where $z_{\alpha/2} = \Phi^{-1}(1-\alpha/2)$ is the $(1-\alpha/2)$-quantile of a standard normal random variable $Z \sim N(0,1)$ and $\Phi(\cdot)$ denotes the cumulative distribution of function of $Z$, i.e., $\Pr(Z> z_{\alpha/2})=\alpha/2$.


While the Wald's interval is a general-purpose and the most classical approach of constructing confidence intervals,
one of the key assumptions made in the construction of the Wald's interval is the statistical independence among the collected data
$\{(p_t,x_t,d_t)\}_{t=1}^T$ across selling periods $t=1,\ldots, T$.
It is known that, without such independence assumptions, the Wald's interval could be significantly biased,
as in the case of multi-armed bandit predictions \citep{deshpande2018accurate} and least-squares estimation in non-mixing time series \citep{lai1982least}.

\begin{figure}[!t]
	\begin{subfigure}{0.99\textwidth}
		\centering
		\includegraphics[width=0.32\textwidth]{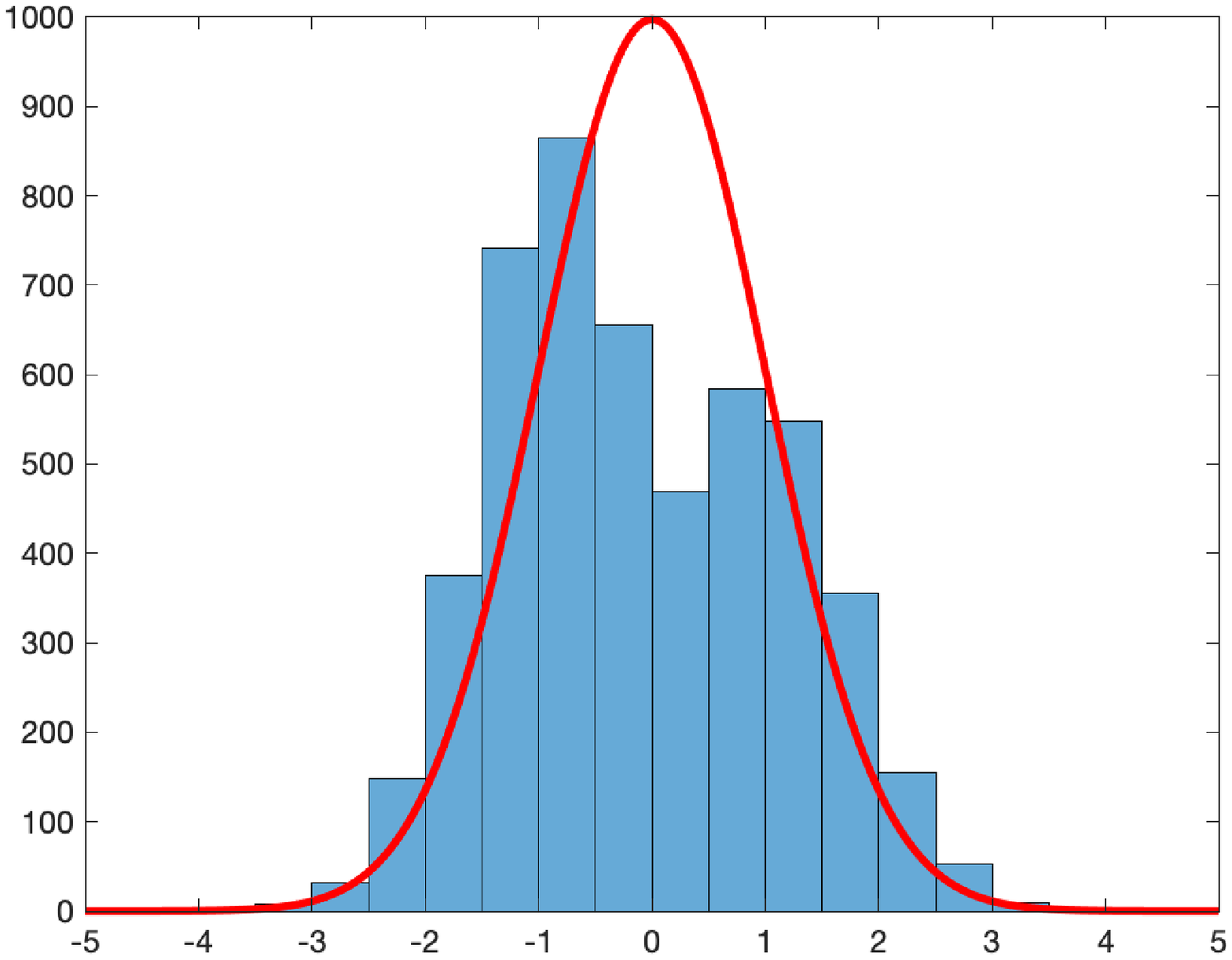}
		\includegraphics[width=0.32\textwidth]{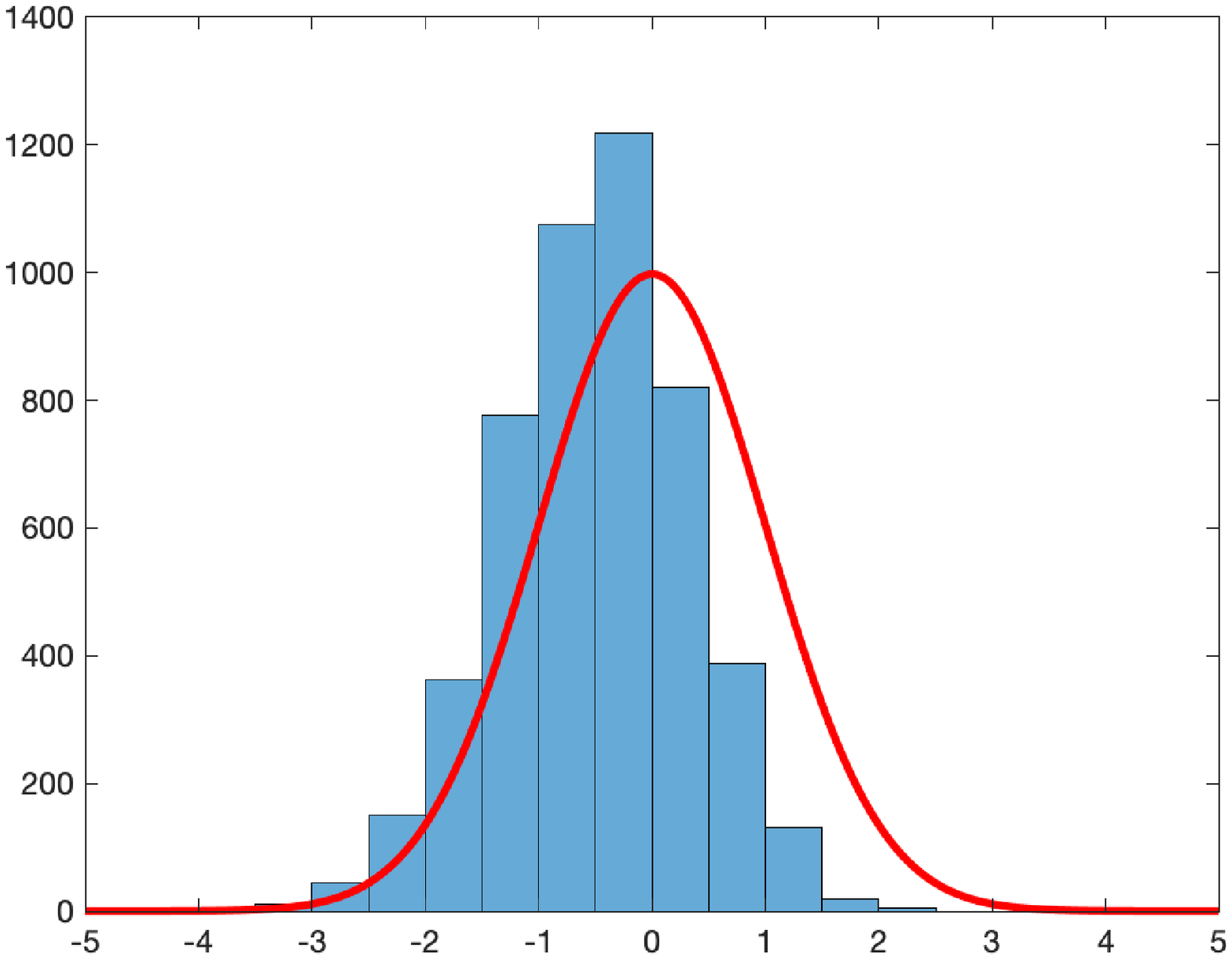}
		\caption{\small  Empirical distributions of $\varepsilon_1,\varepsilon_2$, where $\varepsilon=(\epsilon_1, \epsilon_2)= [\hat I_T(\hat\theta)]^{1/2}(\hat\theta-\theta_0)$.}
	\end{subfigure}
	\begin{subfigure}{0.99\textwidth}
		\centering
		\includegraphics[width=0.32\textwidth]{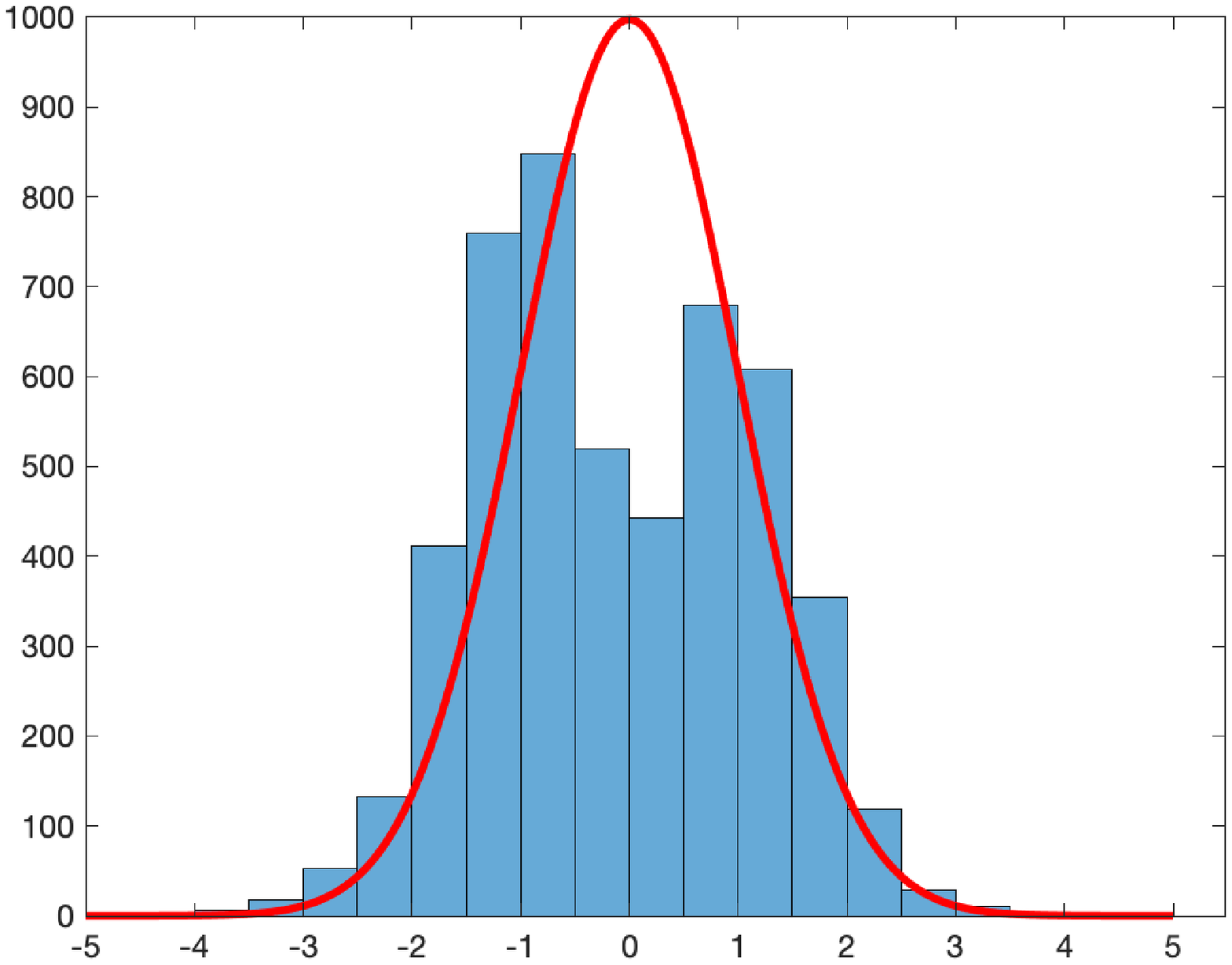}
		\includegraphics[width=0.32\textwidth]{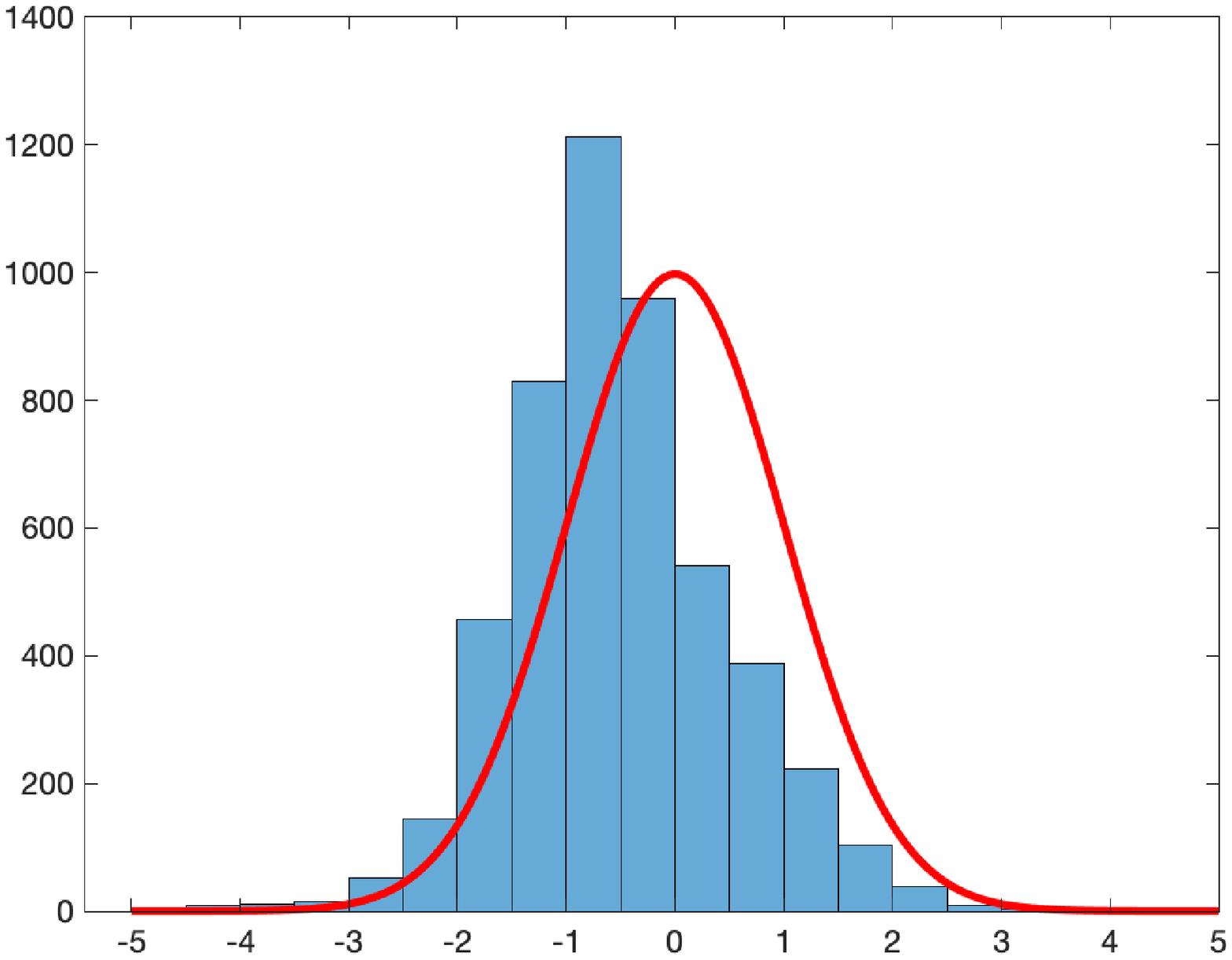}
		\includegraphics[width=0.32\textwidth]{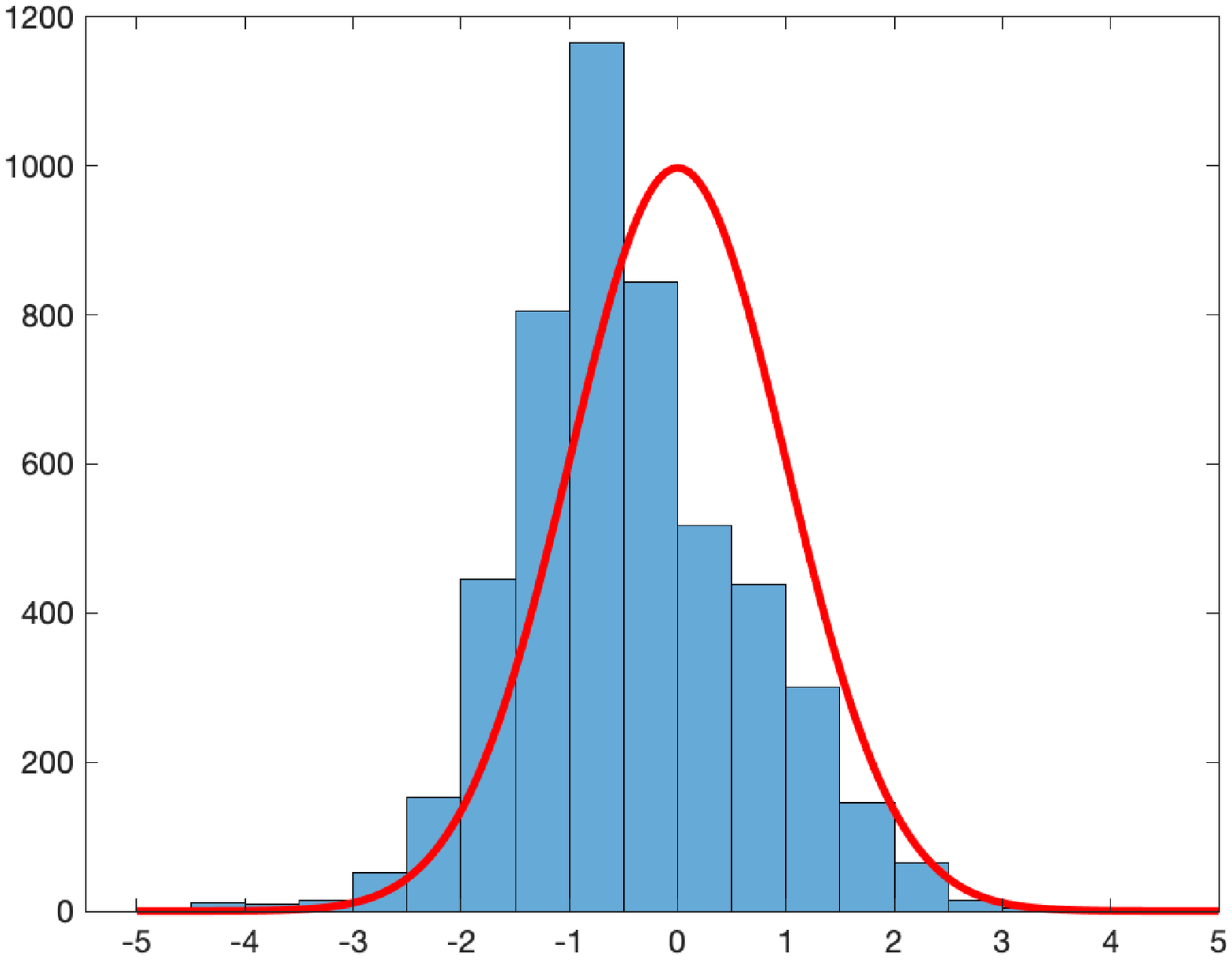}
		\caption{\small Empirical distributions of $[f(p,x;\hat\theta)-f(p,x;\theta_0)]/\hat\sigma_{px}$, where (from left to right) the price and contexts are set to
			$(p,x)=(0.5,0),(0.5,1),(1,1)$, respectively.}
	\end{subfigure}
	\vskip 0.1in
	\caption{Failure of Wald's approach in contextual dynamic pricing: Empirical distributions of the normalized estimation and prediction errors from the Wald's approach.
	}
	\label{fig:wald}
\end{figure}

To better illustrate the failure of Wald's test for adaptively collected data, in Figure \ref{fig:wald}, we plot the empirical distributions of the normalized estimation error $\varepsilon:=[\hat I_T(\hat\theta)]^{1/2}(\hat\theta-\theta_0) \in \mathbb{R}^d$
and the normalized errors of predicted demands $[f(p,x;\hat\theta)-f(p,x;\theta_0)]/\hat\sigma_{px}^2$ for the Wald's interval approach.
In particular, we consider the simple logistic demand model $f(p,x;\theta_0)=e^{\phi(p,x)^\top\theta_0}/{(1+e^{\phi(p,x)^\top\theta_0})}$,
with $d=2$, $\phi(p,x) = (0.9+0.1p, x_t)$ and $\theta_0=(-1, 1)$. The price range is $p\in[0,1]$.
The context generating process $\mathcal C_t$ is designed as $x_{t+1}=z_{t+1}/\max(1,|z_{t+1}|)$, where $z_{t+1}=z_t+{d_t} - f(p_t,x_t;\theta_0)$
and $z_1=0$.
The empirical distributions are obtained with 5000 independent trials, each with $T=10000$ selling periods
and prices determined by the LinUCB algorithm as described in Sec.~\ref{sec:popular-algorithm}.
The top panels in Figure \ref{fig:wald} depict the distributions of two coordinates of $\varepsilon$, and the bottom panels are normalized demand prediction errors $[f(p,x;\hat\theta)-f(p,x;\theta_0)]/\hat\sigma_{px}^2$  for the cases
of $p=0.5,x=0$; $p=0.5,x=1$; $p=1,x=1$, respectively.
One can easily see that, in contextual dynamic pricing the confidence intervals constructed for both the estimation errors $\hat\theta-\theta_0$
and the prediction error (of demands) $f(p,x;\hat\theta)-f(p,x;\theta_0)$ deviate significantly from the desired limiting distributions $\mathcal N(0,I)$ (see \eqref{eq:ci-classical}) and
$\mathcal N(0,1)$ (see \eqref{eq:confidence-classical}), calling for more sophisticated methods to construct accurate confidence intervals.

\section{Main Algorithm and Analysis}\label{sec:main}

The pseudo-code of our proposed algorithm for constructing confidence intervals of the demand function $f$
is given in Algorithm \ref{alg:debiased-confidence}.

{
At a high level, the objective of Algorithm \ref{alg:debiased-confidence} is to construct accurate confidence intervals
in both the ``point-wise'' sense (i.e., confidence intervals for the expected demand $f(p,x;\theta_0)$ in  \eqref{eq:defn-ci}  for a \emph{single} price $p$ and context $x$)
and the ``uniform'' sense (i.e., confidence intervals in \eqref{eq:defn-sup-ci} for $f(p,x;\theta_0)$ that hold uniformly over all possible prices and contexts).
The input to Algorithm \ref{alg:debiased-confidence} is the historical price, context, and demand data over $T$ selling periods,
during which an adaptive dynamic pricing strategy is used. The adaptivity of the pricing strategy means that the demands and prices are highly correlated,
and therefore the basic Wald's intervals cannot be directly applied,  as discussed in the previous section.

The key idea behind our proposed approach is the idea of ``de-biasing'' the empirical risk estimate $\hat\theta^\p$ (also termed as the
``pilot'' estimate in Algorithm \ref{alg:debiased-confidence}).
More specifically, built upon the biased pilot estimate $\hat\theta^\p$, we construct a $d\times T$ ``whitening'' matrix $W$
satisfying certain correlation and norm conditions (the procedure of constructing such a whitening matrix is presented in Algorithm \ref{alg:whitening} and Sec.~\ref{sec:whitening}),
a de-biased estimate $\hat\theta^\d$ is computed by adding the bias-correction term $W(\vct d-\hat{\vct f})$ to the ERM estimate $\hat\theta^\p$, or more specifically
\begin{equation}
\hat\theta^\d = \hat\theta^\p  + W(\vct d-\vct{\hat f}),
\label{eq:defn-debias}
\end{equation}
where $\vct d=(d_1,\cdots,d_T)$, $\hat{\vct f} = (f(p_1,x_1;\hat\theta^\p), \cdots, f(p_T,x_T;\hat\theta^\p))$. {For example, in the linear demand case, $\hat{\vct f} = \left(\langle \phi(p_1, x_1), \hat\theta^\p \rangle, \cdots, \langle \phi(p_T, x_T), \hat\theta^\p \rangle\right)$, while in the logistic case, $\hat{\vct f} = \left(\frac{\exp\{\langle\phi(p_1,x_1),\hat\theta^\p \rangle\}}{1+\exp\{\langle\phi(p_1,x_1),\hat\theta^\p\rangle\}}, \cdots, \frac{\exp\{\langle\phi(p_T,x_T),\hat\theta^\p \rangle\}}{1+\exp\{\langle\phi(p_T,x_T),\hat\theta^\p\rangle\}}\right)$.}
With the bias correction, it can be proved that the bias contained in $\hat\theta^\d$ can be dominated by the main error terms that are asymptotically
normal, as shown in Theorem \ref{thm:asymptotic-distribution} later.
With the asymptotic normality of $\hat\theta^\d-\theta_0$, both point-wise and uniform confidence intervals
can be constructed using either the Delta's method in Eq.~(\ref{eq:confidence-classical}) or
Monte-Carlo methods, as shown in Steps \ref{step:pointwise-ci} and \ref{step:uniform-ci} of Algorithm \ref{alg:debiased-confidence}.

\begin{algorithm}[!t]
\textbf{Input}: prices, purchases and contexts over $T$ selling periods $\{(p_t,d_t,x_t)\}_{t=1}^T$ \;
Compute a ``{pilot}'' estimate $\hat\theta^\p$ using the ERM in Eq.~(\ref{eq:ridge-erm}) with $\lambda=0$\;
Compute the ``whitening'' matrix $W\in\mathbb R^{d\times T}$ by invoking the \textsc{Whiten} procedure in Algorithm \ref{alg:whitening} in Sec.~\ref{sec:whitening}\;
Compute the de-biased estimate $\hat\theta^\d = \hat\theta^p + W(\vct d-\hat{\vct f})$, where $\vct d=(d_1,\cdots,d_T)$
and $\hat{\vct f} = (f(p_1,x_1;\hat\theta^\p), \cdots, f(p_T,x_T;\hat\theta^\p))$
\label{step:debias}

\underline{Construction of \emph{point-wise} confidence intervals}: for fixed $p,x$ and confidence level $1-\alpha$, construct the point-wise confidence interval
\[
[\ell_\alpha^{\debiased}(p,x), u_\alpha^{\debiased}(p,x)] = f(p,x;\hat\theta^\d) \pm z_{\alpha/2}\hat\sigma_{px}^\d,
\]
where $\hat\sigma_{px}^\d = \|\hat D^{1/2}\nabla_\theta f(p,x;\hat\theta^\p)\|_2$, $\hat D=\diag(\hat{\vct\nu})^2$, and  $\hat{\vct \nu} = (\nu(p_1,x_1;\hat\theta^\p), \cdots, \nu(p_T,x_T;\hat\theta^\p))$ ($\nu$ is defined in \eqref{eq:var}).
\label{step:pointwise-ci}

\underline{Construction of \emph{uniform} confidence intervals}: first obtain $M$ independent Monte-Carlo samples of
$\zeta_1,\cdots,\zeta_M\overset{i.i.d.}{\sim}\mathcal N_d(0, W\hat DW^\top)$;
for every $m$ ($1 \leq m \leq M$) compute $a(m) := \max_{p\in[p_{\min},p_{\max}],x\in\mathcal X}|\langle\nabla_\theta f(p,x;\hat\theta^\p), \zeta_m\rangle|$
and let $s_\alpha$ be the $(1-\alpha)$-quantile of $\{a(m)\}_{m=1}^M$;
for $p\in[p_{\min},p_{\max}]$, $x\in\mathcal X$,
construct the uniform confidence interval
\begin{equation}
[L_\alpha^{\debiased}(p,x),U_\alpha^{\debiased}(p,x)] = f(p,x;\hat\theta^\d) \pm s_{\alpha}.
\end{equation}
\label{step:uniform-ci}
\caption{The main algorithm for constructing demand confidence intervals.}
\label{alg:debiased-confidence}
\end{algorithm}

In the rest of this section we provide a rigorous analysis of the proposed confidence intervals in Algorithm \ref{alg:debiased-confidence}.
In Sec.~\ref{sec:debias}, we perform a bias-variance decomposition analysis of the debiased estimate $\hat\theta^\d$
and prove in Theorem \ref{thm:asymptotic-distribution} that, under certain conditions, $\hat\theta^\d-\theta_0$ is asymptotically normally distributed;
In Sec.~\ref{sec:pilot} we upper bound the estimation error of the pilot estimate $\hat\theta^\p$,
and in Sec.~\ref{sec:whitening} we propose a procedure of constructing the whitening matrix $W\in\mathbb R^{d\times T}$ such that
the conditions in Theorem \ref{thm:asymptotic-distribution} are satisfied.
Finally, in Sec.~\ref{sec:level-alpha} we prove that both point-wise and uniform confidence intervals are asymptotically level-$(1-\alpha)$,
theoretically establishing the accuracy of constructed intervals.
}


\subsection{Analysis of the de-biased estimator}\label{sec:debias}

In Step \ref{step:debias} of Algorithm \ref{alg:debiased-confidence}, a \emph{de-biased estimate} $\theta^\d$ is constructed
based on the biased ERM estimate $\theta^\p$ and a certain ``whitening matrix'' $W\in\mathbb R^{d\times T}$.
In this section we analyze the asymptotic distributional properties of $\theta^\d$ based on certain conditions on $W$.
The question of how to obtain a whitening matrix $W$ satisfying the desired conditions will be discussed in the next section.

For notational simplicity, we denote the gradient at time $t$ by $g_t := \nabla_\theta f(p_t,x_t;\hat\theta^\p)\in\mathbb R^d$
and $G := (g_1,\cdots,g_T)^\top\in\mathbb R^{T\times d}$. Also recall the definition of $\xi_t$ for $t=1,\ldots, T$ in \eqref{eq:xi}.
The following lemma shows a bias-variance decomposition $\hat\theta^d-\theta_0$.
\begin{lemma}
The estimation error $\hat\theta^\d-\theta_0$ can be decomposed to $\hat\theta^d-\theta_0=b+v$,
where the bias term $b$ satisfies $\|b\|_2\leq \|I_{d}-WG\|_{\mathrm{op}}\|\hat\theta^\p-\theta_0\|_2 + O(\|\hat\theta^p-\theta\|_2^2)$ almost surely and the variance $v = W\vct{\xi}$, $\vct\xi=(\xi_1,\cdots,\xi_T)\in\mathbb R^T$.
\label{lem:debias-basic}
\end{lemma}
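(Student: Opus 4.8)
The plan is to start from the definition of the debiased estimate in \eqref{eq:defn-debias}, isolate the variance term, and Taylor-expand what remains. Writing $\Delta := \hat\theta^\p - \theta_0$ and letting $\vct f_0 := (f(p_1,x_1;\theta_0),\ldots,f(p_T,x_T;\theta_0))$ denote the vector of true conditional means, the model \eqref{eq:edt-model} gives $\vct d = \vct f_0 + \vct\xi$. Substituting into \eqref{eq:defn-debias},
\begin{equation*}
\hat\theta^\d - \theta_0 = \Delta + W(\vct d - \hat{\vct f}) = \underbrace{W\vct\xi}_{=\,v} + \underbrace{\Delta + W(\vct f_0 - \hat{\vct f})}_{=\,b}.
\end{equation*}
This immediately identifies the variance term $v = W\vct\xi$, so the entire task reduces to bounding the residual $b$.

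Next I would expand each coordinate of $\hat{\vct f}$ about the pilot estimate $\hat\theta^\p$, since this is exactly where the gradients $g_t = \nabla_\theta f(p_t,x_t;\hat\theta^\p)$ collected in $G$ are evaluated; expanding at this point keeps the first-order term exact. By Taylor's theorem with remainder,
\begin{equation*}
(\vct f_0 - \hat{\vct f})_t = f(p_t,x_t;\theta_0) - f(p_t,x_t;\hat\theta^\p) = -g_t^\top\Delta + R_t,
\end{equation*}
where $R_t$ is the second-order remainder. Stacking over $t$ gives $\vct f_0 - \hat{\vct f} = -G\Delta + \vct R$, and hence $b = (I_d - WG)\Delta + W\vct R$. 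The triangle inequality together with the definition of the operator norm then yields $\|b\|_2 \leq \|I_d - WG\|_\op\|\Delta\|_2 + \|W\vct R\|_2$, which reproduces the leading term of the claim.

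The crux is to show $\|W\vct R\|_2 = O(\|\Delta\|_2^2)$. I would bound $\|W\vct R\|_2 \leq \|W\|_\op\|\vct R\|_2$ and control $\vct R$ through the curvature of $f$: a uniform bound on the second derivatives of $f$ --- which is exact ($R_t\equiv 0$) for the linear model and holds with bounded curvature for the logistic link, and more generally follows from the smoothness imposed in (A2) and (C1) --- gives $|R_t|\leq C\|\Delta\|_2^2$ for every $t$, so that $\|\vct R\|_2 \leq C\sqrt T\,\|\Delta\|_2^2$. The resulting factor $\|W\|_\op\sqrt T$ is precisely one of the quantities that the whitening construction of Section \ref{sec:whitening} keeps under control, so it is absorbed into the implicit constant of the $O(\cdot)$ term. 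Since every step is a deterministic manipulation of the realized sample, the bound holds almost surely, as claimed; the main point to watch is that the remainder aggregates over all $T$ coordinates, which is what forces the argument to lean on the norm conditions for $W$ rather than on smoothness of $f$ alone.
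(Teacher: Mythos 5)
Your proof is correct and follows essentially the same route as the paper's: decompose $\hat\theta^\d-\theta_0$ using $\vct d=\vct f_0+\vct\xi$, Taylor-expand $\hat{\vct f}-\vct f_0$ to obtain $G\Delta$ plus a second-order remainder, and read off $b=(I_d-WG)\Delta+W\vct R$ and $v=W\vct\xi$. If anything, you are more explicit than the paper about the one delicate step --- that $W\vct R$ aggregates $T$ second-order remainders and therefore needs the norm control on the columns of $W$ supplied by the whitening construction --- a point the paper's proof silently absorbs into its $O(\|\hat\theta^\p-\theta_0\|_2^2)$ term.
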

\begin{proof}{Proof of Lemma \ref{lem:debias-basic}.}
Recall the definition that $\hat\theta^d = \hat\theta^p + W(\vct d-\hat{\vct f})$,
where $\vct d=(d_1,\cdots,d_T)\in\mathbb R^T$ and $\hat{\vct f}=(f(p_1,x_1;\hat\theta^\p),\cdots,f(p_T,x_T;\hat\theta^\p))\in\mathbb R^T$.
Define also $\vct f = (f(p_1,x_1;\theta_0),\cdots,f(p_T,x_T;\theta_0))\in\mathbb R^T$.
By definition, $\vct d = \vct f + \vct\xi$. Subsequently,
$$
\hat\theta^\d-\theta_0
= \hat\theta^\p-\theta_0 + W(\vct f-\hat{\vct f}) + W(\vct d-\vct f) = \hat\theta^\p-\theta_0 + W(\vct f-\hat{\vct f}) + W\vct{\xi}.
$$
Next, by Taylor expansion and the smoothness of $f$ (see Assumption (A2)), we have for every $t$ that
$f(p_t,x_t;\hat\theta^\p) - f(p_t,x_t;\theta_0) = \langle\nabla_\theta f(p_t,x_t;\theta_0), \hat\theta^\p-\theta_0\rangle + O(\|\hat\theta^\p-\theta_0\|_2^2)
= \langle\nabla_\theta f(p_t,x_t;\hat\theta^\p), \hat\theta^\p-\theta_0\rangle + O(\|\hat\theta^\p-\theta_0\|_2^2)$.
Hence, $\hat{\vct f}-\vct f = G(\hat\theta^\p-\theta_0) + O(\|\theta^\p-\theta_0\|_2^2)$.
We then have
$$
\hat\theta^\d-\theta_0 = \underbrace{(I-WG)(\hat\theta^\p-\theta_0) + O(\|\theta^\p-\theta_0\|_2^2)}_{=: b} + \underbrace{W\vct{\xi}}_{=: v},
$$
which completes the proof. $\square$
\end{proof}

Our next lemma shows that, when the bias term $b$ is sufficiently small, the error $\hat\theta^\d-\theta_0$ converges in distribution
to a multivariate Gaussian distribution.  
\begin{theorem}
Suppose the following conditions hold:
\begin{enumerate}
\item The non-anticipativity condition: the $t$-th column of $W$, $w_t$, is measurable conditioned on $\{\xi_{t'},p_{t'},d_{t'},x_{t'},w_{t'}\}_{t'<t}\cup\{x_t,p_t\}$;
\item $\mathbb E[\sum_{t=1}^T\|w_t\|_2^3] \to 0$ as $T\to\infty$;
\item Let $D=\diag(\vct{\nu})^2\in\mathbb R^{T\times T}$ be a diagonal matrix with $\vct\nu=(\nu(p_1,x_1;\theta_0),\cdots,\nu(p_T,x_T;\theta_0))\in\mathbb R^T$;
\begin{equation}
\frac{\max\{\|I-WG\|_\op\|\hat\theta^\p-\theta_0\|_2, \|\hat\theta^\p-\theta_0\|_2^2\}}{\min\{1,\sqrt{\lambda_{\min}(WDW^\top)}\}}\overset{p}{\to} 0 \;\;\;\;\;\;\text{as}\;\; T\to\infty.
\label{eq:bias-condition}
\end{equation}
\end{enumerate}
Then it holds that
\begin{equation}
(WDW^\top)^{-1/2}(\hat\theta^\d-\theta_0) \overset{d}{\to} \mathcal N(0, I_d) \;\;\;\;\;\;\text{as}\;\; T\to\infty.
\label{eq:asymptotic-normality}
\end{equation}
\label{thm:asymptotic-distribution}
\end{theorem}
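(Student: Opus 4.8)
The plan is to start from the bias--variance decomposition of Lemma~\ref{lem:debias-basic}, namely $\hat\theta^\d-\theta_0 = b + W\vct\xi$, premultiply by the normalizer $(WDW^\top)^{-1/2}$, and argue that the normalized bias is negligible in probability while the normalized variance term obeys a martingale central limit theorem. Concretely I would write
\[
(WDW^\top)^{-1/2}(\hat\theta^\d-\theta_0) = (WDW^\top)^{-1/2}b + (WDW^\top)^{-1/2}W\vct\xi ,
\]
and treat the two summands separately: the first converges to $0$ in probability and the second converges in distribution to $\mathcal N(0,I_d)$. Slutsky's theorem then yields \eqref{eq:asymptotic-normality}.

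For the bias term I would bound its Euclidean norm by the operator norm of the normalizer times $\|b\|_2$. Since $\|(WDW^\top)^{-1/2}\|_\op = \lambda_{\min}(WDW^\top)^{-1/2}$, combining with the almost-sure bound $\|b\|_2 \le \|I_d-WG\|_\op\|\hat\theta^\p-\theta_0\|_2 + O(\|\hat\theta^\p-\theta_0\|_2^2)$ from Lemma~\ref{lem:debias-basic} gives
\[
\|(WDW^\top)^{-1/2}b\|_2 \le \frac{\max\{\|I_d-WG\|_\op\|\hat\theta^\p-\theta_0\|_2,\ \|\hat\theta^\p-\theta_0\|_2^2\}}{\sqrt{\lambda_{\min}(WDW^\top)}}\cdot O(1).
\]
Because $\min\{1,\sqrt{\lambda_{\min}(WDW^\top)}\}\le \sqrt{\lambda_{\min}(WDW^\top)}$, the right-hand side is dominated by the quantity in \eqref{eq:bias-condition}, which tends to $0$ in probability by Condition~3; hence $(WDW^\top)^{-1/2}b\overset{p}{\to}0$.

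The substance is the variance term. Here I would identify $M_T := W\vct\xi = \sum_{t=1}^T w_t\xi_t$ as the terminal value of a vector martingale. Under the non-anticipativity Condition~1 each $w_t$ (and each $\nu_t := \nu(p_t,x_t;\theta_0)$) is measurable with respect to the $\sigma$-field generated by the past together with $x_t,p_t$, while Assumption~(B1) gives $\mathbb E[\xi_t\mid \mathcal F_{t-1},x_t,p_t]=0$ and (B2) gives $\mathbb E[\xi_t^2\mid x_t,p_t]=\nu_t^2$. Thus $\{w_t\xi_t\}$ is a martingale difference sequence whose predictable quadratic variation is exactly
\[
\sum_{t=1}^T \mathbb E[w_t\xi_t\xi_t^\top w_t^\top\mid \mathcal F_{t-1},x_t,p_t] = \sum_{t=1}^T \nu_t^2\, w_tw_t^\top = WDW^\top .
\]
I would then invoke the Cram\'er--Wold device: for a fixed unit vector $u\in\mathbb R^d$, set $\eta_t := u^\top(WDW^\top)^{-1/2}w_t$, so that $u^\top(WDW^\top)^{-1/2}M_T = \sum_t \eta_t\xi_t$ has predictable quadratic variation $\sum_t \eta_t^2\nu_t^2 = u^\top(WDW^\top)^{-1/2}(WDW^\top)(WDW^\top)^{-1/2}u = 1$ identically. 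The remaining hypothesis for a scalar martingale CLT is a Lindeberg/Lyapunov condition, which I would verify from Condition~2: using $|\eta_t|\le \lambda_{\min}(WDW^\top)^{-1/2}\|w_t\|_2$ and the boundedness of $\xi_t$ (Assumption~(B1)), one gets $\sum_t\mathbb E|\eta_t\xi_t|^3 \le C\,\lambda_{\min}(WDW^\top)^{-3/2}\,\mathbb E\big[\sum_t\|w_t\|_2^3\big]\to 0$, so $\sum_t \eta_t\xi_t\overset{d}{\to}\mathcal N(0,1)$, and Cram\'er--Wold upgrades this to $(WDW^\top)^{-1/2}M_T\overset{d}{\to}\mathcal N(0,I_d)$.

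The main obstacle is that the normalizer $(WDW^\top)^{-1/2}$ is \emph{not predictable}: it depends on $w_{t'},p_{t'},x_{t'}$ for all $t'\le T$, so the scaled increments $\eta_t\xi_t$ are not themselves martingale differences and one cannot apply a CLT to their sum by treating it as an ordinary normalized martingale. The resolution is to use a martingale CLT phrased in terms of normalization by the \emph{predictable} quadratic variation $\langle M\rangle_T = \sum_t\mathbb E[w_t\xi_t\xi_t^\top w_t^\top\mid\mathcal F_{t-1}]$, which---crucially, thanks to Condition~1---coincides exactly with the normalizer $WDW^\top$ we are using. Results of this type (stable/mixing martingale limit theorems, e.g.\ the self-normalized versions in Hall and Heyde's martingale limit theory) deliver $\langle M\rangle_T^{-1/2}M_T\overset{d}{\to}\mathcal N(0,I_d)$ under precisely the Lindeberg-type control established above, so the non-adapted nature of the normalization causes no difficulty once it is recognized as the predictable quadratic variation. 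A secondary technical point I would watch is the interplay between Condition~2 and a lower bound on $\lambda_{\min}(WDW^\top)$ (guaranteed by the whitening construction of Sec.~\ref{sec:whitening}), which is what makes the normalized Lyapunov sum vanish.
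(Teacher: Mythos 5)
Your proposal follows the same skeleton as the paper's proof: the bias--variance decomposition of Lemma \ref{lem:debias-basic}, elimination of the normalized bias via Condition~3 and Slutsky's theorem, and a martingale CLT for $W\vct\xi$ normalized by $WDW^\top$. The one genuine difference is in how that CLT is established. The paper proves it from scratch in Lemma \ref{lem:levy-continuity} by a telescoping characteristic-function computation (in the style of Brown's martingale CLT and \cite{lai1982least}), showing $\mathbb E[\exp\{ia^\top W\vct\xi\}]$ is within $\mathbb E[\sum_t O(\|w_t\|_2^3)]$ of $\mathbb E[\exp\{-a^\top WDW^\top a/2\}]$ and then invoking L\'evy's continuity theorem; you instead invoke Cram\'er--Wold plus an off-the-shelf self-normalized martingale CLT. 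Your observation that the normalizer $(WDW^\top)^{-1/2}$ is not predictable, and that the resolution is to recognize $WDW^\top$ as the predictable quadratic variation, is exactly the point the paper's Fourier-analytic argument is handling implicitly, so the two routes are morally equivalent; yours is shorter but leans on an external theorem, while the paper's is self-contained.

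One concrete looseness in your execution: the Lyapunov bound $\sum_t\mathbb E|\eta_t\xi_t|^3\le C\,\lambda_{\min}(WDW^\top)^{-3/2}\,\mathbb E[\sum_t\|w_t\|_2^3]$ both pulls the random quantity $\lambda_{\min}(WDW^\top)^{-3/2}$ outside the expectation and is quantitatively too crude. Conditions 1--3 of the theorem do not lower-bound $\lambda_{\min}(WDW^\top)$ (the $\min\{1,\cdot\}$ in Condition~3 caps the denominator at $1$), and even granting the whitening construction's bound $\lambda_{\min}(WDW^\top)=\Omega(1/T)$ together with $\mathbb E[\sum_t\|w_t\|_2^3]=O(T\eta^3)$, the product is $O(T^{5/2-3\upsilon})$ for $\eta=T^{-\upsilon}$, which does not vanish over the paper's full range $\upsilon\in(1/2,5/6]$. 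The fix is the sharper estimate $\sum_t|\eta_t|^3\le(\max_t|\eta_t|)\sum_t\eta_t^2$, where $\sum_t\eta_t^2\nu_t^2=1$ forces $\sum_t\eta_t^2=O(1)$ and $\max_t|\eta_t|\le\lambda_{\min}(WDW^\top)^{-1/2}\max_t\|w_t\|_2=O(T^{1/2-\upsilon})\to0$. (To be fair, the paper's own final step in Lemma \ref{lem:levy-continuity} --- the ``affine transform'' $a\mapsto(WDW^\top)^{1/2}a$ applied to a random matrix --- glosses over the same dependence on $\lambda_{\min}(WDW^\top)$.)
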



{We note that the first and second items and the condition on $\|I-WG\|_\op$ in the third item are all related to the whitening matrix $W$, which will be satisfied according our  construction of $W$ in Sec.~\ref{sec:whitening} (see Lemma \ref{lem:diff-iwg} and Corollary \ref{cor:condition-main}). The convergence rate condition on the pilot estimator  $\|\hat\theta^\p-\theta_0\|_2$ in the third item will be verified in the next subsection (Sec.~\ref{sec:pilot}) using tools from self-normalized empirical process.}

The key idea behind the proof of Theorem \ref{thm:asymptotic-distribution} mainly involves two steps.
The first step is to show that, under the non-anticipativity conditions imposed on $W$, the variance term $W\vct\xi$
converges in distribution to a normal distribution using martingale CLT type arguments.
The second step shows that the bias term $b$ is asymptotically dominated by $W\vct\xi$, and therefore the entire estimation error
$\hat\theta^\d-\theta_0$ converges in distribution to a normal distribution.
The complete proof is given below.

\begin{proof}{Proof of Theorem \ref{thm:asymptotic-distribution}.}
Adopt the decomposition of $\hat\theta^\d-\theta_0$ in Lemma \ref{lem:debias-basic}.
By definition, $v=W\vct\xi = \sum_{t=1}^T \xi_tw_t$.
For every $t\leq T$ define $S_t := \sum_{t'\leq t}\xi_{t'}w_{t'}$ and $S_0 := 0$.
Because $\mathbb E[\xi_t|w_t,\xi_{t-1},w_{t-1},\cdots,\xi_1,w_1] =\mathbb E[\xi_t|p_t,x_t] = 0$ by the non-anticipativity condition, 
we know that $\{S_t-S_{t-1}\}_t$
is a \emph{martingale}.
The following lemma shows how the characteristic functions of $\{S_t\}$ converge
to the characteristic function of $\mathcal N(0,WDW^\top)$.
\begin{lemma}
Let $z\sim\mathcal N(0,I_d)$ be a fresh sample from the standard $d$-dimensional Gaussian distribution.
Define also $\tilde v := (WDW^\top)^{-1/2}v$.
Then for any $a\in\mathbb R^d$, $\|a\|_2\leq 1$, it holds that
\begin{align*}
\big|\mathbb E[\exp\{ia^\top v\}] - \exp\{-\|a\|_2^2/2\}]\big| \leq \mathbb E\left[\sum_{t=1}^TO(\|w_t\|_2^3)\right] .
\end{align*}
\label{lem:levy-continuity}
\end{lemma}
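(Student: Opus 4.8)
The statement is a quantitative martingale central limit theorem, and the plan is to prove it by a Lindeberg-type telescoping argument on characteristic functions (the analytic engine behind the L\'evy continuity theorem invoked in the lemma's name). First I would reduce the vector-valued object to a scalar martingale: writing $\tilde w_t := (WDW^\top)^{-1/2}w_t$ so that $\tilde v = \sum_{t=1}^T \xi_t\tilde w_t$, I set $\eta_t := \xi_t\langle a,\tilde w_t\rangle$ and $S_T := a^\top\tilde v = \sum_{t=1}^T\eta_t$, so the quantity to control is $\mathbb E[e^{iS_T}]-e^{-\|a\|_2^2/2}$. (Here I read the displayed ``$v$'' as the normalized $\tilde v$, consistent with the target characteristic function $e^{-\|a\|_2^2/2}$ of $\mathcal N(0,I_d)$.) By the non-anticipativity condition (item 1 of Theorem \ref{thm:asymptotic-distribution}), $\tilde w_t$ is measurable with respect to $\mathcal F_{t-1}\cup\{x_t,p_t\}$; hence (B1) gives $\mathbb E[\eta_t\mid\mathcal F_{t-1}]=\langle a,\tilde w_t\rangle\,\mathbb E[\xi_t\mid p_t,x_t]=0$, and (B2) gives the conditional variance $c_t:=\mathbb E[\eta_t^2\mid\mathcal F_{t-1}]=\langle a,\tilde w_t\rangle^2\nu(p_t,x_t;\theta_0)^2$.

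The cornerstone of the argument is the \emph{exact} quadratic-variation identity $\sum_{t=1}^T c_t = a^\top(WDW^\top)^{-1/2}(WDW^\top)(WDW^\top)^{-1/2}a = \|a\|_2^2$, which holds deterministically by the definitions of $\tilde w_t$ and $D$. With this in hand I would telescope through the partial sums: define, for $0\le j\le T$, $\Phi_j := \mathbb E\big[\exp\{i\sum_{t\le j}\eta_t\}\exp\{-\tfrac12\sum_{t>j}c_t\}\big]$, so that $\Phi_T=\mathbb E[e^{iS_T}]$ and, using the identity to make the exponent deterministic, $\Phi_0=e^{-\|a\|_2^2/2}$. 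Then $|\mathbb E[e^{iS_T}]-e^{-\|a\|_2^2/2}|\le\sum_{j=1}^T|\Phi_j-\Phi_{j-1}|$, where each increment swaps the factor $e^{i\eta_j}$ for $e^{-c_j/2}$.

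The hard part — and the place where the construction of $W$ really pays off — is the conditioning step, which in a generic martingale CLT fails because the ``remaining variance'' $\sum_{t>j}c_t$ is future-measurable. The exact identity rescues this: $\sum_{t>j}c_t=\|a\|_2^2-\sum_{t\le j}c_t$ is $\mathcal F_{j-1}$-measurable (each $c_t$ with $t\le j$ is $\mathcal F_{t-1}\subseteq\mathcal F_{j-1}$-measurable) and nonnegative, so the prefactor $R_{j-1}:=\exp\{i\sum_{t<j}\eta_t\}\exp\{-\tfrac12(\|a\|_2^2-\sum_{t\le j}c_t)\}$ is $\mathcal F_{j-1}$-measurable with $|R_{j-1}|\le1$. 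Pulling it out of the conditional expectation gives $|\Phi_j-\Phi_{j-1}|\le\mathbb E\big[\,|\mathbb E[e^{i\eta_j}-e^{-c_j/2}\mid\mathcal F_{j-1}]|\,\big]$.

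Finally I would estimate each increment by a third-moment Taylor bound. Using $|e^{ix}-1-ix+\tfrac12 x^2|\le\tfrac16|x|^3$ together with $\mathbb E[\eta_j\mid\mathcal F_{j-1}]=0$ and $\mathbb E[\eta_j^2\mid\mathcal F_{j-1}]=c_j$ yields $\mathbb E[e^{i\eta_j}\mid\mathcal F_{j-1}]=1-\tfrac12 c_j+O(\mathbb E[|\eta_j|^3\mid\mathcal F_{j-1}])$, while $|e^{-y}-1+y|\le\tfrac12 y^2$ gives $e^{-c_j/2}=1-\tfrac12 c_j+O(c_j^2)$. The boundedness of the sub-Gaussian noise (B1) and of $\nu$ (B2), combined with $\|a\|_2\le1$, then bound $\mathbb E[|\eta_j|^3\mid\mathcal F_{j-1}]=O(\|\tilde w_j\|_2^3)$ and $c_j^2=O(\|\tilde w_j\|_2^4)=O(\|\tilde w_j\|_2^3)$ (the last step using that $\|\tilde w_j\|_2$ is eventually small under item 2 of the Theorem). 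Summing over $j$ delivers $|\mathbb E[e^{iS_T}]-e^{-\|a\|_2^2/2}|\le\mathbb E[\sum_{t=1}^T O(\|\tilde w_t\|_2^3)]$, which is the claimed bound once the bounded factor $\|(WDW^\top)^{-1/2}\|_\op$ relating $\|\tilde w_t\|_2$ to $\|w_t\|_2$ is absorbed into the $O(\cdot)$. I expect the future-measurability/conditioning step to be the genuine obstacle; the rest is routine Taylor estimation under (B1)--(B2) and $\|a\|_2\le1$.
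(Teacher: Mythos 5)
Your overall engine is the right one and matches the paper's: a Lindeberg-type telescoping of characteristic functions over the martingale increments, with each swap controlled by a third-moment Taylor bound using $\mathbb E[\xi_t\mid\cdot]=0$, $\mathbb E[\xi_t^2\mid\cdot]=D_{tt}$ and the boundedness in (B1)--(B2). However, your reduction step introduces a genuine gap. You normalize at the outset, setting $\tilde w_t:=(WDW^\top)^{-1/2}w_t$ and $\eta_t:=\xi_t\langle a,\tilde w_t\rangle$, and then assert that non-anticipativity makes $\tilde w_t$ measurable with respect to $\mathcal F_{t-1}\cup\{x_t,p_t\}$. That is false: the matrix $WDW^\top=\sum_{t'=1}^T D_{t't'}w_{t'}w_{t'}^\top$ is a functional of the \emph{entire} sample path, so $\tilde w_t$ depends on $w_{t'},p_{t'},x_{t'}$ for $t'>t$, which in turn depend on $d_t$ and hence on $\xi_t$. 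Consequently $\mathbb E[\eta_t\mid\mathcal F_{t-1}]\neq\langle a,\tilde w_t\rangle\,\mathbb E[\xi_t\mid p_t,x_t]$, the conditional variance identity $\mathbb E[\eta_t^2\mid\mathcal F_{t-1}]=c_t$ fails, and the $\mathcal F_{j-1}$-measurability of $\sum_{t\le j}c_t$ that you rely on to pull out the prefactor $R_{j-1}$ also fails. The exact quadratic-variation identity $\sum_t c_t=\|a\|_2^2$ is a correct pathwise algebraic fact, but it does not rescue the conditioning: one cannot repair it by enlarging the filtration to include $W$ either, since conditioning on the full $W$ destroys the centering of $\xi_t$. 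Every subsequent step of your telescoping inherits this defect.

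The paper avoids the problem by telescoping with the \emph{unnormalized, adapted} columns $w_t$: it carries the random covariance $\Sigma_T=WDW^\top$ along via the correction factor $\exp\{a^\top(\Sigma_t-\Sigma_T)a/2\}$ with $\Sigma_t=\sum_{t'\le t}D_{t't'}w_{t'}w_{t'}^\top$, obtains
\begin{equation*}
\mathbb E[\exp\{ia^\top v\}] = \mathbb E[\exp\{-a^\top WDW^\top a/2\}] + \mathbb E\Bigl[\sum_{t=1}^T O(\|w_t\|_2^3)\Bigr],
\end{equation*}
and only at the very end performs the substitution $a\mapsto(WDW^\top)^{1/2}a$ to pass to the normalized statement. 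Your third-moment estimates ($\mathbb E[|\eta_j|^3\mid\cdot]=O(\|w_j\|_2^3)$, $c_j^2=O(\|w_j\|_2^3)$ for small $\|w_j\|_2$) are exactly the ones the paper uses and would survive intact, but the normalization must be deferred to after the telescoping, not performed before it.
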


The proof of Lemma \ref{lem:levy-continuity} is based on standard Fourier-analytic approaches \citep{billingsley2008probability,lai1982least,brown1971martingale},
and is deferred to the supplementary material.
Lemma \ref{lem:levy-continuity} shows that the characteristic function of $\widetilde v=(WDW^\top)^{-1/2}v$ converges point-wise to the characteristic function
of $z\sim\mathcal N(0,I_d)$, provided that $\mathbb E[\sum_{t=1}^T\|w_t\|_2^3]\to 0$ as $T\to\infty$.
By Levy's continuity theorem, this implies $\tilde v\overset{d}{\to} \mathcal N(0,I_d)$, or more specifically
\begin{equation}
(WDW^\top)^{-1/2}v\overset{d}{\to}\mathcal N(0,I_d).
\label{eq:asymptotic-normality-v}
\end{equation}

Because $\tr(WDW^\top)/d\geq\lambda_{\min}(WDW^\top)$ and $d$ is treated as a constant in this paper, 
the third condition in Theorem \ref{thm:asymptotic-distribution}
would imply that $|b|^2/\tr(WDW^\top)\overset{p}{\to} 0$ as $T\to\infty$.
This implies that $(WDW^\top)^{-1/2}[(\hat\theta^\d-\theta_0)-v] \overset{p}{\to} 0$ as $T\to\infty$.
Consequently, $(WDW^\top)^{-1/2}(\hat\theta^\d-\theta_0)\overset{d}{\to}\mathcal N(0,I_d)$ by Slutsky's theorem. $\square$
\end{proof}

\begin{figure}[t]
\begin{subfigure}{0.99\textwidth}
\centering
\includegraphics[width=0.32\textwidth]{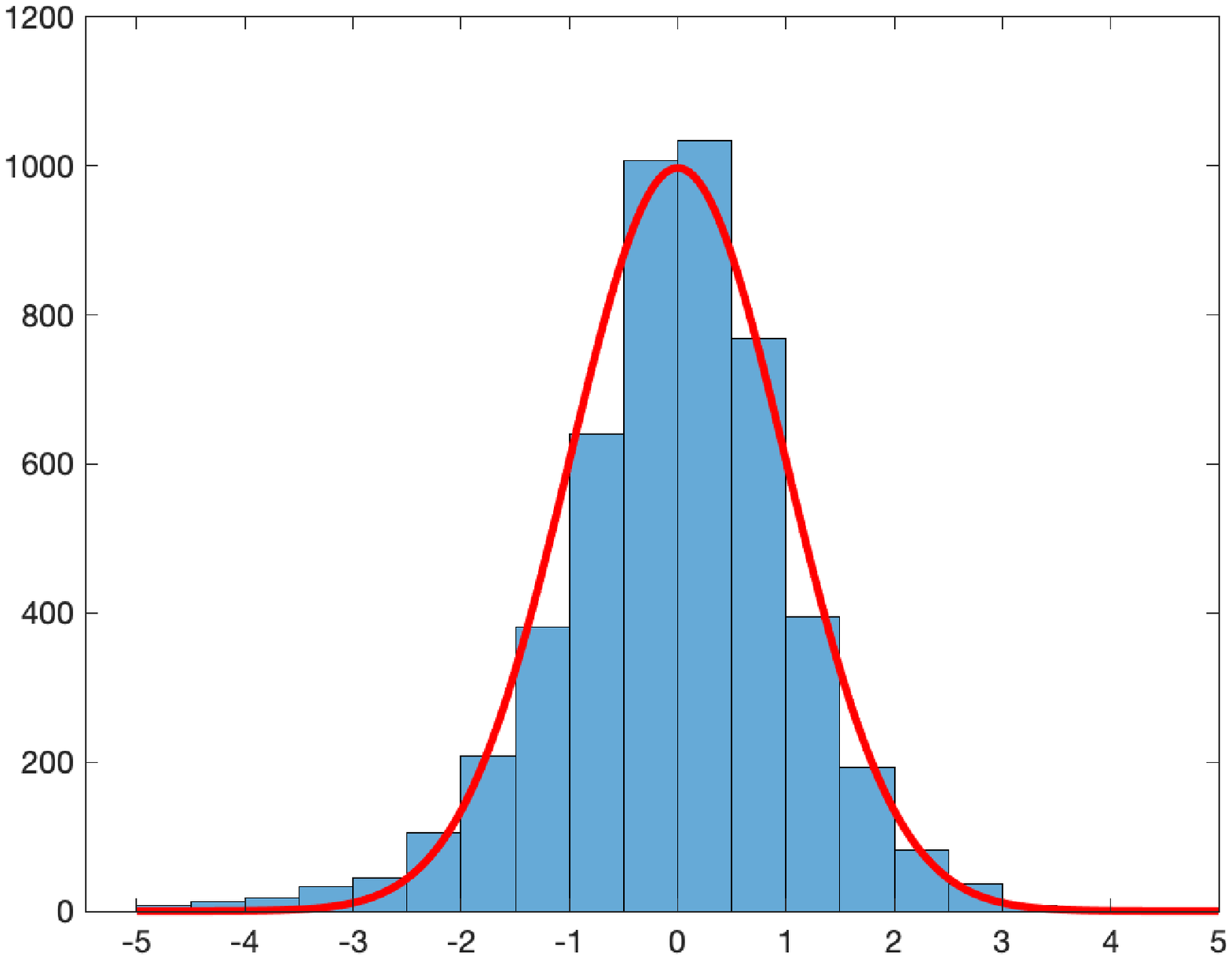}
\includegraphics[width=0.32\textwidth]{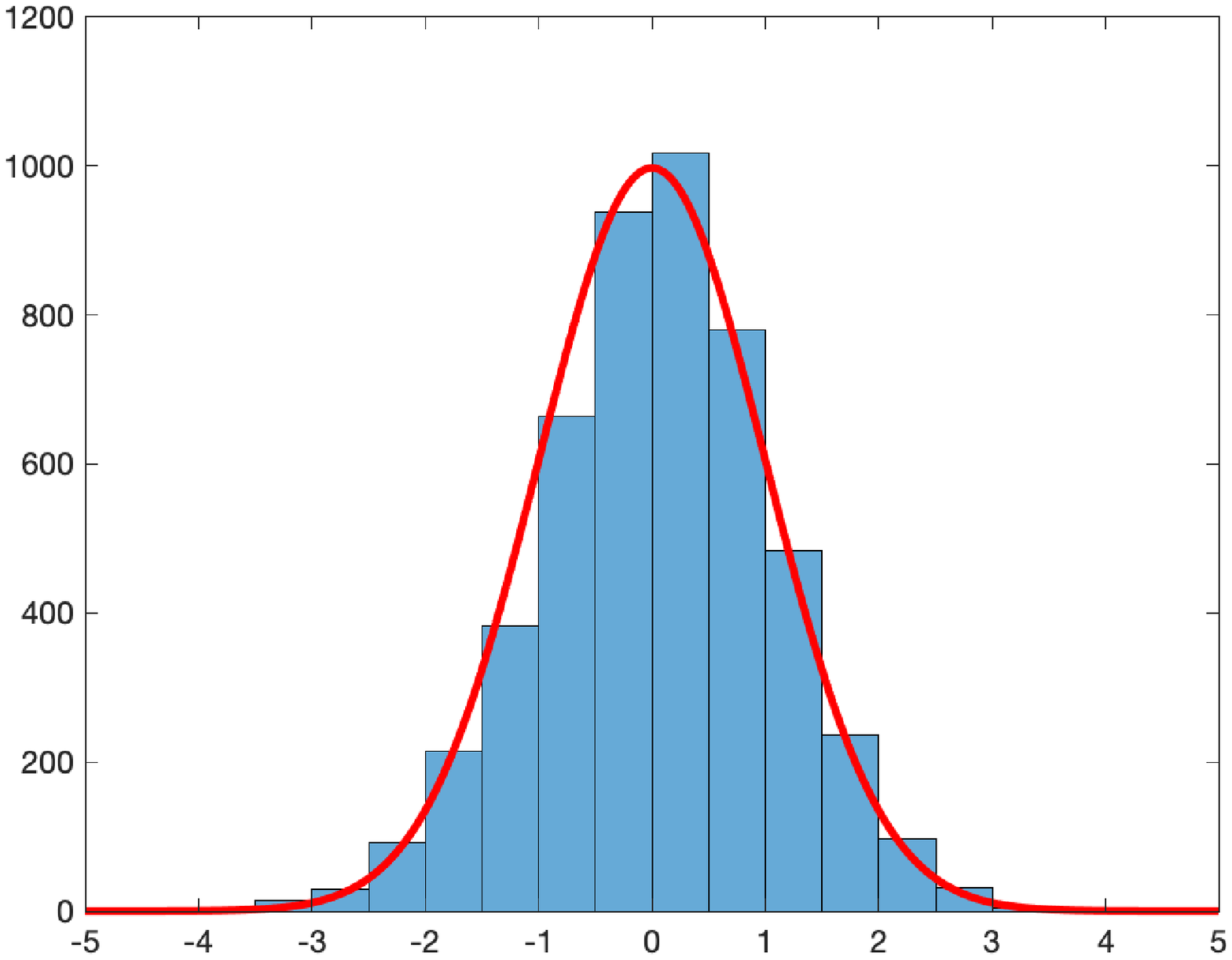}
\caption{\small  Empirical distributions of $\varepsilon_1,\varepsilon_2$, where $\varepsilon=(\epsilon_1, \epsilon_2) =(WDW^\top)^{-1/2}(\hat\theta^\d-\theta_0)$.}
\end{subfigure}
\begin{subfigure}{0.99\textwidth}
\centering
\includegraphics[width=0.32\textwidth]{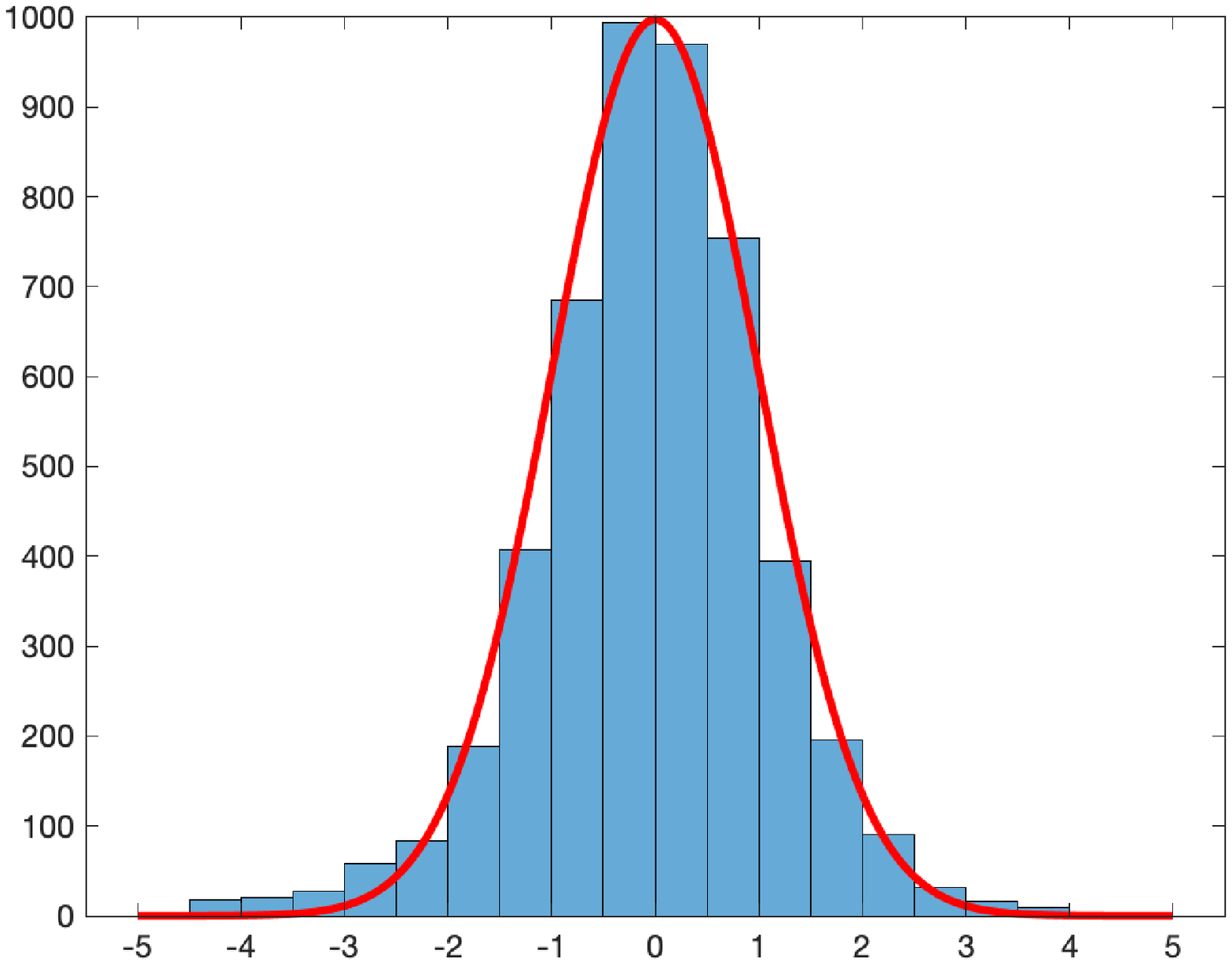}
\includegraphics[width=0.32\textwidth]{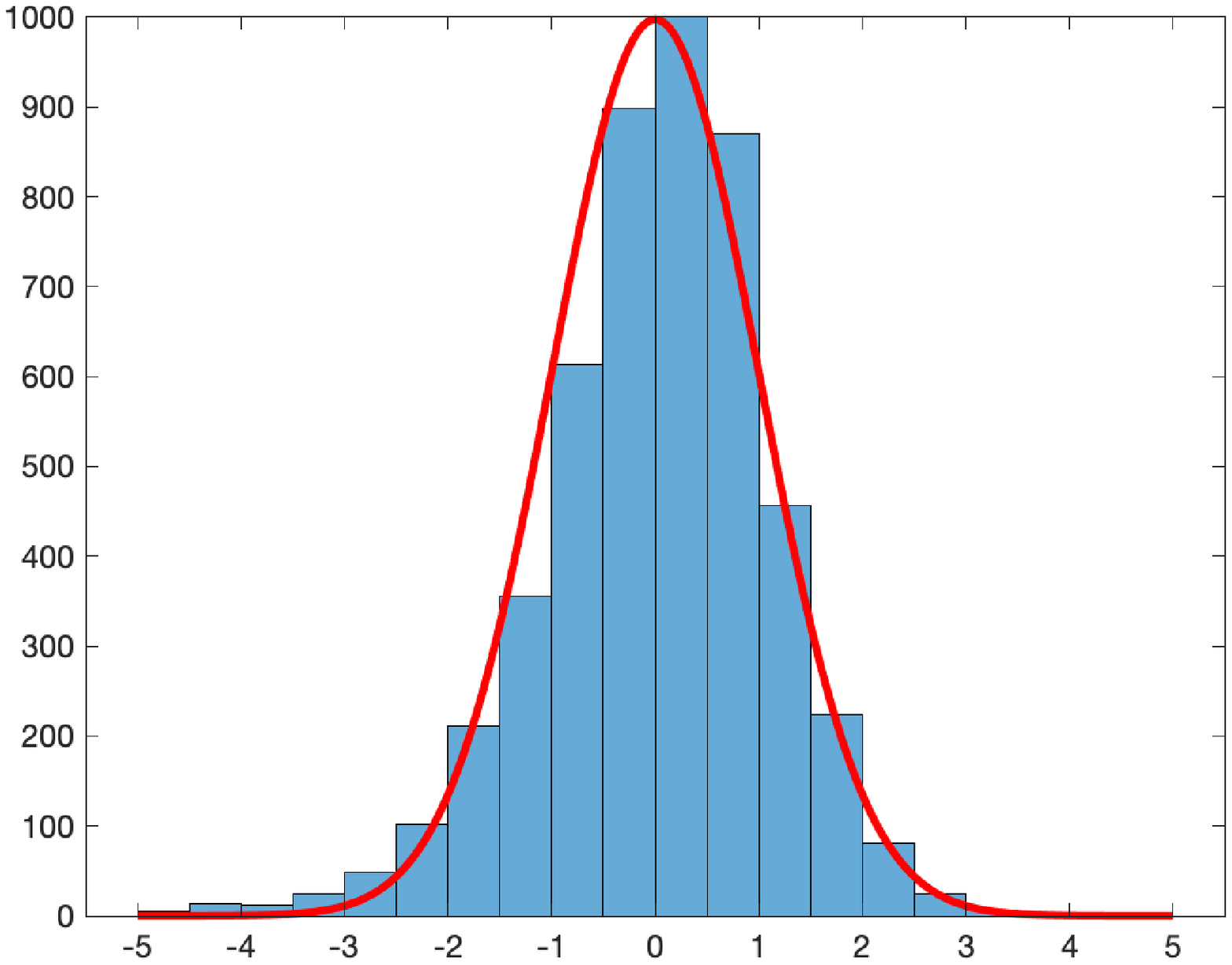}
\includegraphics[width=0.32\textwidth]{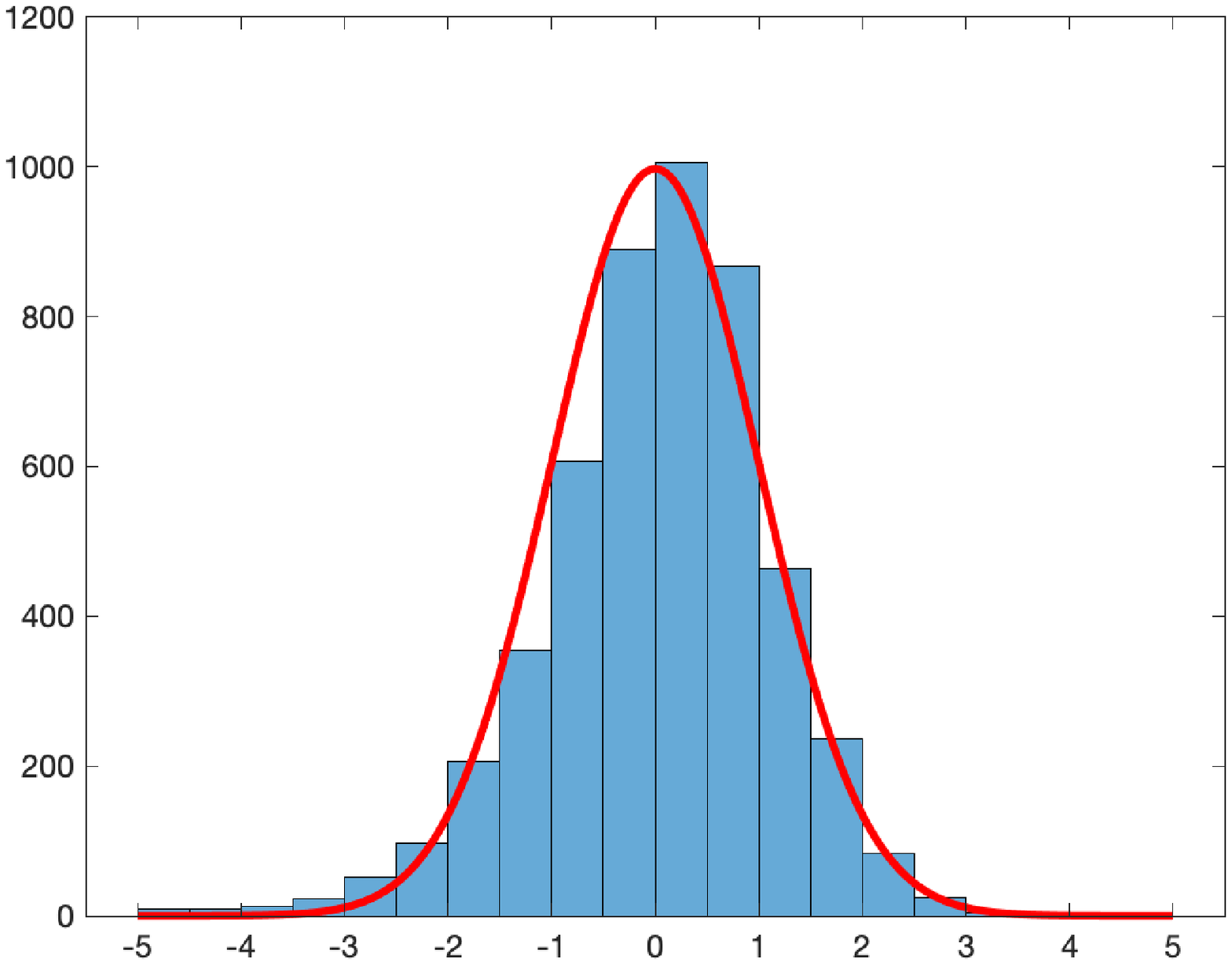}
\caption{\small Empirical distributions of $[f(p,x;\hat\theta^\d)-f(p,x;\theta_0)]/\hat\sigma_{px}^\d$, where (from left to right) the price and contexts are
$(p,x)=(0.5,0),(0.5,1),(1,1)$, respectively.}
\end{subfigure}
\vskip 0.1in
\caption{Empirical distributions of the normalized estimation and prediction errors from the de-biased approach in Algorithm \ref{alg:debiased-confidence}. The experimental setting is identical to the one in Figure \ref{fig:wald}.
}
\label{fig:debias}
\end{figure}

With Theorem \ref{thm:asymptotic-distribution} demonstrating the asymptotic normality of $\hat\theta^\d-\theta_0$,
it is easy to derive the asymptotic normality of the demand prediction error $f(p,x;\hat\theta^d)-f(p,x;\theta_0)$ as well.
More specifically, we have the following result:

\begin{corollary}
Let $p,x$ be fixed and all conditions in Theorem \ref{thm:asymptotic-distribution} are satisfied.
Suppose also that all assumptions listed in Sec.~\ref{sec:assumption} hold. Then we have that
$$
[f(p,x;\hat\theta^\d)-f(p,x;\theta_0)]/\hat\sigma_{px}^\d \overset{d}{\to} \mathcal N(0,1),
$$
where $\hat\sigma_{px}^\d$ is defined in Step \ref{step:pointwise-ci} of Algorithm \ref{alg:debiased-confidence}.
\label{cor:predict-asymptotic-distribution}
\end{corollary}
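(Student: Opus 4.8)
The plan is to combine a first-order delta-method expansion of $f(p,x;\cdot)$ around $\theta_0$ with the asymptotic normality already established in Theorem~\ref{thm:asymptotic-distribution}, and then to verify that the plug-in normalizer $\hat\sigma_{px}^\d$ is consistent for the true asymptotic standard deviation. Write $g := \nabla_\theta f(p,x;\theta_0)$, which is a fixed vector since $p,x$ and $\theta_0$ are fixed, and set $\sigma_{px} := (g^\top WDW^\top g)^{1/2}$, the true analogue of the plug-in quantity $\hat\sigma_{px}^\d$ built from $W\hat D W^\top$ as in Step~\ref{step:uniform-ci}. First I would Taylor-expand using the continuous differentiability of $f$ in Assumption (A2),
$$f(p,x;\hat\theta^\d) - f(p,x;\theta_0) = g^\top(\hat\theta^\d - \theta_0) + O(\|\hat\theta^\d - \theta_0\|_2^2),$$
so that, after dividing by $\sigma_{px}$, the task splits into three pieces: (i) the limiting law of the linear term $g^\top(\hat\theta^\d-\theta_0)/\sigma_{px}$, (ii) the negligibility of the quadratic remainder, and (iii) replacing $\sigma_{px}$ by $\hat\sigma_{px}^\d$.

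For (i), observe that $g^\top(\hat\theta^\d-\theta_0)/\sigma_{px} = u_T^\top\big[(WDW^\top)^{-1/2}(\hat\theta^\d-\theta_0)\big]$, where $u_T := (WDW^\top)^{1/2}g/\sigma_{px}$ is a unit vector. Since Theorem~\ref{thm:asymptotic-distribution} gives $(WDW^\top)^{-1/2}(\hat\theta^\d-\theta_0)\overset{d}{\to}\nml(0,I_d)$, the projection onto a unit direction morally yields an $\nml(0,1)$ limit. The subtlety is that $u_T$ is itself random, depending on the data-dependent covariance $WDW^\top$ and on the pilot-based whitening $W$, so this is \emph{not} a literal continuous-mapping step; I would instead apply the scalar self-normalized martingale CLT directly to $g^\top W\vct\xi=\sum_t(g^\top w_t)\xi_t$, exactly as in Lemma~\ref{lem:levy-continuity} but for this scalar combination, whose predictable variance is $\sum_t(g^\top w_t)^2\nu_t^2 = g^\top WDW^\top g=\sigma_{px}^2$. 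For (ii), the quadratic remainder over $\sigma_{px}$ is at most $O(\|\hat\theta^\d-\theta_0\|_2^2)/(\sqrt{\lambda_{\min}(WDW^\top)}\,\|g\|_2)$, using $\sigma_{px}\ge \sqrt{\lambda_{\min}(WDW^\top)}\,\|g\|_2$ (assuming $g\neq 0$, without which the interval is degenerate); this is $o_p(1)$ by exactly the bias-domination mechanism used in the proof of Theorem~\ref{thm:asymptotic-distribution}, now applied to $\|\hat\theta^\d-\theta_0\|_2^2$ via $\|\hat\theta^\d-\theta_0\|_2=O_p(\sqrt{\lambda_{\max}(WDW^\top)})$ and the shrinkage and bounded conditioning of $WDW^\top$ guaranteed by the whitening construction.

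Step (iii) is where I expect the real work to be. I must show $\hat\sigma_{px}^\d/\sigma_{px}\overset{p}{\to}1$, i.e. that replacing $\theta_0$ by the \emph{biased} pilot $\hat\theta^\p$ in both the gradient and the variance weights $\nu(p_t,x_t;\cdot)$ does not disturb the normalization. I would invoke the consistency $\hat\theta^\p\overset{p}{\to}\theta_0$ proved in Sec.~\ref{sec:pilot}, the continuity of $\nabla_\theta f$ from Assumption (A2) to get $\nabla_\theta f(p,x;\hat\theta^\p)\to g$, and the Lipschitz continuity together with the two-sided boundedness of $\nu$ in Assumption (B2) to get $\hat D = (1+o_p(1))D$ entrywise; sandwiching the quadratic form $\nabla_\theta f(p,x;\hat\theta^\p)^\top W\hat D W^\top \nabla_\theta f(p,x;\hat\theta^\p)$ between constant multiples of $g^\top WDW^\top g$ then forces the ratio to $1$. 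The main obstacle is precisely this \emph{uniform} control of the perturbed quadratic form: the diagonal of $\hat D$ has $T\to\infty$ entries, each perturbed by an amount proportional to $\|\hat\theta^\p-\theta_0\|_2$, so I must argue the perturbation is uniformly small across all $T$ coordinates (which the uniform Lipschitz bound on $\nu$ and the compactness of $\mathcal X$ and $[p_{\min},p_{\max}]$ in Assumptions (A1)--(B2) supply) rather than merely small coordinate-by-coordinate, since the quadratic form could otherwise accumulate the perturbations. Finally, combining (i)--(iii) by Slutsky's theorem yields $[f(p,x;\hat\theta^\d)-f(p,x;\theta_0)]/\hat\sigma_{px}^\d\overset{d}{\to}\nml(0,1)$.
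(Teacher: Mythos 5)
Your proposal is correct and follows essentially the same route as the paper's proof: a first-order Taylor expansion of $f(p,x;\cdot)$, the asymptotic normality of $\hat\theta^\d-\theta_0$ from Theorem \ref{thm:asymptotic-distribution}, domination of the quadratic remainder by $\hat\sigma_{px}^\d=\Omega_P(1/\sqrt{T})$, and Slutsky's theorem to replace $D$ and $\nabla_\theta f(p,x;\theta_0)$ by their plug-in versions using the Lipschitz continuity and the lower bound on $\nu$ from Assumption (B2). Your extra care in step (i) about projecting an asymptotically normal vector onto a data-dependent direction (resolved by rerunning the scalar martingale CLT of Lemma \ref{lem:levy-continuity} on $g^\top W\vct{\xi}$) addresses a point the paper glosses over, but it relies on the same underlying machinery.
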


The proof of Corollary \ref{cor:predict-asymptotic-distribution} is quite standard, by using local Taylor expansions at $f(p,x;\theta_0)$
and invoking Slutsky's theorem. For completeness, we give the proof of Corollary \ref{cor:predict-asymptotic-distribution} in the supplementary materials.

To illustrate the normality of the de-biased estimator $\hat\theta^\d$ and the corresponding predicted demand function  $f(p,x;\hat\theta^\d)$, Figure \ref{fig:debias} plots the empirical distributions of the (normalized) estimation errors and demand prediction errors based on $\hat\theta^\d$.
Apart from the difference in model and variance estimates, the plots in Figure \ref{fig:debias} and those in Figure \ref{fig:wald}
are produced using exactly the same experimental and model parameter settings.
Comparing Figure \ref{fig:debias} against Figure \ref{fig:wald},
we can see that the empirical distributions of the errors of de-biased estimates align much more closely with the desired limiting distributions
$\mathcal N(0,I_d)$ and $\mathcal N(0,1)$, and there is no significant deviates in either high-density or tail regions.
This justifies the validity of confidence intervals constructed using $\hat\theta^\d$, as we shall discuss in details in Sec.~\ref{sec:level-alpha}
later.

\subsection{Analysis of the pilot estimate $\hat\theta^\p$}\label{sec:pilot}

From the conditions listed in Theorem \ref{thm:asymptotic-distribution} in the previous section (see item 3),
it is essential to upper bound the deviation of $\hat\theta^\p$ from the true underlying model $\theta_0$.
In this section we analyze how close the pilot estimate $\hat\theta^\p$ is from the underlying true model $\theta_0$ in terms of  $\|\hat\theta^\p-\theta_0\|_2$.
We will prove a more general result applicable to the empirical risk minimizer (ERM) at any time period $t$.
More specifically, for every $t$ we define
$$
\hat\theta^\p_t := \arg\min_{\theta\in\Theta} \sum_{t'<t} \rho(d_{t'},p_{t'},x_{t'};\theta),
$$
as the ERM on the data collected during time periods prior to $t$.
Clearly, our target $\hat\theta^\p = \hat\theta_{T+1}^\p$.

\begin{lemma}
Suppose all assumptions in Sec.~\ref{sec:assumption} hold.
Then for any $t\gtrsim d\log d$, it holds that $\|\hat\theta_t^\p-\theta_0\|_2 = O_P(\sqrt{d\log t/t})$.
\label{lem:pilot}
\end{lemma}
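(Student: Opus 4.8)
The plan is to exploit the first-order optimality of the ERM together with the strong-convexity-in-expectation guaranteed by Assumption (D1). Write $L_t(\theta) := \sum_{t'<t}\rho(d_{t'},p_{t'},x_{t'};\theta)$, so that $\hat\theta_t^\p = \arg\min_{\theta\in\Theta} L_t(\theta)$. I would first argue that, on a high-probability event, $\hat\theta_t^\p$ lies in the interior of $\Theta$ (a localization step discussed at the end), so that $\nabla_\theta L_t(\hat\theta_t^\p)=0$. A mean-value expansion of the gradient around $\theta_0$ then yields $0 = \nabla_\theta L_t(\theta_0) + H_t(\hat\theta_t^\p-\theta_0)$, where $H_t := \nabla^2_{\theta\theta^\top}L_t(\bar\theta)$ is the empirical Hessian evaluated at some $\bar\theta$ on the segment between $\theta_0$ and $\hat\theta_t^\p$. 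Provided $H_t$ is invertible, this gives $\hat\theta_t^\p-\theta_0 = -H_t^{-1}\nabla_\theta L_t(\theta_0)$ and hence $\|\hat\theta_t^\p-\theta_0\|_2 \le \|H_t^{-1}\|_\op\,\|\nabla_\theta L_t(\theta_0)\|_2$. It then suffices to (i) upper bound $\|\nabla_\theta L_t(\theta_0)\|_2$ and (ii) lower bound $\lambda_{\min}(H_t)$.

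For (i), Assumption (C2) implies that the score increments $Z_{t'} := \nabla_\theta\rho(d_{t'},p_{t'},x_{t'};\theta_0)$ satisfy $\E[Z_{t'}\mid\mathcal F_{t'-1},p_{t'},x_{t'}]=0$, so the partial sums $\nabla_\theta L_t(\theta_0)=\sum_{t'<t}Z_{t'}$ form a vector-valued martingale whose increments are bounded and sub-Gaussian by (C1) and (B1). I would apply a self-normalized martingale tail bound (e.g.\ the self-normalized bound of \cite{abbasi2012online}, or a coordinate-wise Freedman/Azuma inequality combined with a union bound over the $d$ coordinates) to conclude $\|\nabla_\theta L_t(\theta_0)\|_2 = O_P(\sqrt{dt\log t})$.

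For (ii), the empirical Hessian $\nabla^2_{\theta\theta^\top}L_t(\theta) = \sum_{t'<t}\nabla^2_{\theta\theta^\top}\rho(d_{t'},p_{t'},x_{t'};\theta)$ has conditional expectation with smallest eigenvalue at least $\kappa_0$ per period by Assumption (D1), so in expectation $\lambda_{\min}\gtrsim\kappa_0 t$. The difficulty is that the data are adaptively collected and that $H_t$ must be controlled at the random intermediate point $\bar\theta$, not merely at $\theta_0$. I would therefore establish a uniform-in-$\theta$ lower bound: using the boundedness of $\nabla^2_{\theta\theta^\top}\rho$ and the Lipschitz continuity of the Hessian in $\theta$ (guaranteed by the bounded third derivative in (C1)), I reduce the supremum over $\Theta$ to a supremum over a finite $\epsilon$-net, and at each net point apply a matrix Freedman/Azuma inequality to the martingale $\sum_{t'<t}\big(\nabla^2_{\theta\theta^\top}\rho(\cdots;\theta)-\E[\nabla^2_{\theta\theta^\top}\rho(\cdots;\theta)\mid\mathcal F_{t'-1},p_{t'},x_{t'}]\big)$. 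The net has cardinality $(C/\epsilon)^d$, whose logarithm is $O(d\log(1/\epsilon))$, and the matrix concentration contributes an extra $\log d$ factor; balancing these is exactly what forces the regime $t\gtrsim d\log d$ and yields $\lambda_{\min}(H_t)\ge \tfrac12\kappa_0 t$ uniformly over $\theta\in\Theta$ with high probability, so that $\|H_t^{-1}\|_\op\le 2/(\kappa_0 t)$.

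Combining (i) and (ii) gives $\|\hat\theta_t^\p-\theta_0\|_2 \le \frac{2}{\kappa_0 t}\cdot O_P(\sqrt{dt\log t}) = O_P(\sqrt{d\log t/t})$, as claimed. The one remaining gap is the localization step used to justify interiority and the mean-value expansion, since strictly the expansion is valid only once $\bar\theta$ is known to lie where $H_t$ is well-conditioned; I would close this with a standard basin-of-attraction argument, in which the uniform strong convexity from (ii) together with the smallness of $\|\nabla_\theta L_t(\theta_0)\|_2$ from (i) forces the constrained minimizer to lie within $O_P(\sqrt{d\log t/t})$ of $\theta_0$, hence in the interior of $\Theta$ for large $t$. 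The main obstacle throughout is the uniform Hessian lower bound for adaptive data in step (ii): the absence of independence rules out off-the-shelf i.i.d.\ matrix concentration and necessitates the self-normalized/matrix-martingale machinery together with the covering argument, which is precisely where the $d\log d$ threshold originates.
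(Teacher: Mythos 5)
Your proposal is correct and relies on the same core machinery as the paper's proof --- a martingale concentration bound for the score $\sum_{t'<t}\nabla_\theta\rho(\cdot;\theta_0)$ of order $O_P(\sqrt{dt\log t})$ via Azuma plus a covering/union-bound step, and a matrix-martingale lower bound $\Omega(\kappa_0 t)$ on the empirical Hessian using Assumption (D1), with the $t\gtrsim d\log d$ threshold arising from exactly the same balance. The difference is in the glue. You start from the \emph{first-order} optimality condition $\nabla_\theta L_t(\hat\theta_t^\p)=0$ and a mean-value expansion of the gradient, which forces you to (a) justify that the constrained minimizer lies in the interior of $\Theta$ (the localization step you flag), and (b) be careful that the mean-value theorem does not hold for vector-valued maps with a single intermediate point --- you should use the integral form $\nabla L_t(\hat\theta)-\nabla L_t(\theta_0)=\bigl(\int_0^1\nabla^2L_t(\theta_0+s(\hat\theta-\theta_0))\,ds\bigr)(\hat\theta-\theta_0)$, which your uniform-in-$\theta$ Hessian bound does cover. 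The paper instead starts from the \emph{zeroth-order} basic inequality $\sum_{t'<t}\rho(\cdot;\hat\theta_t^\p)\le\sum_{t'<t}\rho(\cdot;\theta_0)$, Taylor-expands the risk to second order with Lagrangian remainder, and then divides through by $\|\hat\theta_t^\p-\theta_0\|_2$, reducing the score term to a supremum over the unit ball handled by an $\varepsilon$-net. This buys you the result for the constrained ERM directly, with no interiority or basin-of-attraction argument needed, at the cost of the slightly less familiar ``divide by the error norm'' step. Both routes deliver the same rate $O_P(\kappa_0^{-1}\sqrt{d\log t/t})$; if you keep your version, closing the localization gap carefully (and switching to the integral-form Hessian) is all that is required.
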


The proof of Lemma \ref{lem:pilot} is based on the standard argument of self-normalized martingale empirical processes 
 and its applications in online contextual bandits,
see e.g., the works of \cite{rusmevichientong2010linearly,abbasi2012online,filippi2010parametric}.
Concentration inequalities for matrix martingales are also involved \citep{tropp2012user}.
We defer the complete technical proof to the supplementary material.

\subsection{The whitening procedure}\label{sec:whitening}

The de-biased estimate $\hat\theta^\d$ is constructed using a ``whitening'' matrix $W\in\mathbb R^{d\times T}$
to counteract the bias inherent in the pilot ERM estimator $\hat\theta^\p$.
The conditions in Theorem \ref{thm:asymptotic-distribution} suggest that $W$ needs to satisfy three properties:
\begin{enumerate}
\item $W=(w_1,\cdots,w_T)$ should be constructed such that $w_t|p_t,x_t,\mathcal F_{t-1}$ is measurable,
where $\mathcal F_{t-1}=\{(d_{t'},p_{t'},x_{t'})\}_{t'<t}$;
in other words, the computation of $w_t$ should only involve $\mathcal F_{t-1}$ and $p_t,x_t$;
\item The norms of each column of $W$, or $\|w_t\|_2$, should be relatively evenly distributed, so that $\mathbb E[\sum_{t=1}^T\|w_t\|_2^3]\to 0$ holds;
\item  $WG$ should be as close to $I_{d\times d}$ as possible, in order to fix the bias in $\hat\theta^\p$.
\end{enumerate}

\begin{algorithm}[t]
\textbf{Input}: historical data $\{(p_t,d_t,x_t)\}_{t=1}^T$, incremental parameter $\eta=T^{-\upsilon}$, $\upsilon\in(1/2,1)$\;
Initialize: $Z=I_{d\times d}$\;
\For{$t=1,2,\cdots,T$}{
	{Compute $\hat\theta_t^\p = \arg\min_{\theta\in\Theta}\sum_{t'<t}\rho(d_{t'},p_{t'},x_{t'};\theta)$}\;
	Compute $w_t = (Z\nabla_\theta f(p_t,x_t;\hat\theta_t^\p))/\|\nabla_\theta f(p_t,x_t;\hat\theta_t^\p)\|_2^2$\;\label{step:wt}
	If $\|w_t\|_2\geq\eta$ then normalize $w_t\gets \eta w_t/\|w_t\|_2$\;\label{step:tilde-wt}
	Update $Z \gets Z - w_t   \left(\nabla_\theta f(p_t,x_t;\hat\theta_t^\p)\right)^\top$\;
}
\textbf{Output}: the whitening matrix $W=(w_1,\cdots,w_T)\in\mathbb R^{d\times T}$\;
\caption{The \textsc{Whitening} procedure.}
\label{alg:whitening}
\end{algorithm}

Our procedure of constructing the whitening matrix $W$ is outlined in Algorithm \ref{alg:whitening}.
Now we provide the intuition behind  Algorithm
\ref{alg:whitening}. For the ease of discussion,
let us pretend for now that $\hat\theta_t^\p\equiv\theta_0$,
which implies that $\nabla_\theta f(p_t,x_t;\hat\theta_t^p)\approx \nabla_t\theta f(p_t,x_t;\theta_0)=g_t$ (i.e., the $t$-th row of the matrix $G\in\mathbb R^{T\times d}$).
Intuitively, to find a whitening matrix $W\in\mathbb R^{d\times T}$ such that $\|I_d-WG\|_\op$ is as small as possible (see \eqref{eq:bias-condition}),
one simply sets $W=G^\dagger$, the Moore-Penrose pseudo-inverse of $G$.
Since $T\gg d$, we know that $WG$ is precisely $I_d$ if $G$ has full column ranks.

Such an approach, however, violates the first two conditions in Theorem \ref{thm:asymptotic-distribution}.
First, because $W=G^\dagger$ depends on the entire matrix $G$, the $t$-th column of $W$, $w_t$, may not be measurable under the filtration
of prior history {(i.e., utilizing the information from later time periods)}. Furthermore, the columns of $W=G^\dagger$ might be particularly large if $G$ is ill-conditioned,
jeopardizing the $\mathbb E[\sum_{t=1}^T\|w_t\|_2^3]\to 0$ condition in Theorem \ref{thm:asymptotic-distribution}.

To address the above-mentioned challenges, one cannot simply set $W=G^\dagger$ but must construct or optimize such a $W$
in a \emph{sequential way}.
Starting from $Z_0=I_{d\times d}$, at each time period $t$ the column $w_t$ of $W$ is constructed sequentially
so as to satisfy both {non-anticipativity} and small-norm conditions.
More specifically, let $Z_t := I_{d\times d} - \sum_{t'<t}w_{t'}g_{t'}^\top$ be the ``remainder'' of the identity matrix after the first $(t-1)$ time periods.
Our objective is to reduce the norm of $Z_t$ as much as possible at each time period, so that $\|Z_{T+1}\|_\op$ is close to zero.
At time $t$, however, the constructed column $w_t$ must be computed using the previous time periods, and should \emph{not} use any information
from $\{g_{t'}\}_{t'>t}$ in order to satisfy the {non-anticipativity} condition in Theorem \ref{thm:asymptotic-distribution}.
Furthermore, the norm of $w_t$ should not be too large.
Taking both constraints into consideration, the column $w_t$ could be computed as the optimal solution to the following constrained optimization problem:
\begin{equation}
w_t = \arg\min_{w\in\mathbb R^d} \|Z_t-wg_t^\top\|_\op \;\;\;\;\;\;\text{s.t.}\;\; \|w\|_2\leq\eta,
\label{eq:wt-opt}
\end{equation}
where $\eta>0$ is a small constant upper bounding the magnitude of $w_t$ (we will discuss the choice of $\eta$ in the next paragraph).
It is easy to verify that, the solution to Eq.~(\ref{eq:wt-opt}) is precisely the $w_t$ computed in Algorithm \ref{alg:whitening}.
In particular, if the projection of $Z_t$ onto the direction of $g_t$, $Z_tg_t/\|g_t\|_2^2$, is small, then $w_t$ is simply the projection
$Z_tg_t/\|g_t\|_2^2$ so that $\|Z_t-wg_t^\top\|_\op$ is minimized.
On the other hand, if $Z_tg_t/\|g_t\|_2^2$ is too large then the projection is again projected to the $\ell_2$ ball of radius $\eta$,
so that $\|w_t\|_2\leq\eta$ is always satisfied.

From the above discussion,  the role of $\eta$ is important.
If $\eta$ is too large, then the condition $\mathbb E[\sum_{t=1}^T\|w_t\|_2^3]\to 0$ could be violated, invalidating the limiting distribution analysis in Theorem \ref{thm:asymptotic-distribution}.
More detailed calculations show that $\eta$ needs to satisfy $\eta=o(T^{-1/3})$ for $\mathbb E[\sum_{t=1}^T\|w_t\|_2^3]\to 0$ to hold.
On the other hand, if $\eta$ is too small then at the end $\|Z_{T+1}\|_\op$ might be too large, violating the third condition in Theorem \ref{thm:asymptotic-distribution} (by having a very large discrepancy $\|I_d-WG\|_\op$).
More involved calculations (see, e.g., Corollary \ref{cor:condition-main} below) show that $\eta$ needs to satisfy $\eta=\omega(1/T)$ and
$\eta=o(T^{-1/2-\delta})$ for $\|I_d-WG\|_\op$ to be sufficiently small.
To summarize, we recommend the scaling of $\eta=T^{-\upsilon}$ with $\upsilon\in(1/2,1)$.
Our theoretical analysis shows that with $\upsilon\in(1/2,1)$ the main limiting distribution results will hold.


Our next lemma shows that, under our assumptions in Sec.~\ref{sec:assumption}, the discrepancy $\|I-WG\|_\op$ can be
effectively upper bounded when $\eta$ is set appropriately.
\begin{lemma}
Suppose all assumptions made in Sec.~\ref{sec:assumption} hold, and $\eta$ satisfies $\eta T\to\infty$.
Then
$$
\|I_d-WG\|_\op = O_P(\eta\sqrt{T}).
$$
\label{lem:diff-iwg}
\end{lemma}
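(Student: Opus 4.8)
The plan is to compare the matrix actually produced by the \textsc{Whitening} procedure with the target matrix $WG$ in the statement. The crucial point is that Algorithm~\ref{alg:whitening} builds each $w_t$ and updates its running matrix $Z$ using the gradient $\tilde g_t:=\nabla_\theta f(p_t,x_t;\hat\theta_t^\p)$ evaluated at the \emph{running} pilot estimate $\hat\theta_t^\p$, whereas $G=(g_1,\dots,g_T)^\top$ is defined through $g_t=\nabla_\theta f(p_t,x_t;\hat\theta^\p)$ at the \emph{final} pilot estimate $\hat\theta^\p$. Writing $Z_{T+1}=I_d-\sum_{t=1}^T w_t\tilde g_t^\top$ for the terminal value of $Z$, I would first record the exact identity
\[
I_d-WG \;=\; Z_{T+1}+\sum_{t=1}^T w_t(\tilde g_t-g_t)^\top \;=:\; Z_{T+1}+E,
\]
and then bound the two pieces separately, showing $\|Z_{T+1}\|_\op=O_P(\eta)$ and $\|E\|_\op=O_P(\eta\sqrt T)$.

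For the terminal matrix I would introduce the Frobenius potential $\Phi_t:=\|Z_t\|_F^2$, with $\Phi_1=d$. A direct computation on the update $Z_{t+1}=Z_t-w_t\tilde g_t^\top$ shows that the per-step decrease $\Delta_t:=\Phi_t-\Phi_{t+1}$ is always nonnegative, so $\Phi_t$ is monotonically non-increasing, and that in the two regimes of Step~\ref{step:tilde-wt} (unclipped versus clipped) it obeys the pointwise bound $\Delta_t\ge\min\{\|Z_t\tilde g_t\|_2^2/C^2,\ \eta\|Z_t\tilde g_t\|_2\}$, where $C:=\sup_{p,x,\theta}\|\nabla_\theta f(p,x;\theta)\|_2<\infty$ by Assumption~(A2). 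Since $Z_t$ and $\hat\theta_t^\p$ are both $\mathcal F_{t-1}$-measurable, Assumption~(D1) (applied conditionally on $\mathcal F_{t-1}$ with $\theta=\hat\theta_t^\p$) gives $\mathbb E[\|Z_t\tilde g_t\|_2^2\mid\mathcal F_{t-1}]=\tr(Z_t^\top Z_t\,\mathbb E[\tilde g_t\tilde g_t^\top\mid\mathcal F_{t-1}])\ge\kappa_0\Phi_t$. Combining this with $\|Z_t\tilde g_t\|_2\le C\sqrt{\Phi_t}$, hence $\mathbb E[\|Z_t\tilde g_t\|_2\mid\mathcal F_{t-1}]\ge\kappa_0\sqrt{\Phi_t}/C$, and restricting to realizations where the clipped regime dominates, I would obtain a one-step drift $\mathbb E[\Delta_t\mid\mathcal F_{t-1}]\ge c\,\eta\sqrt{\Phi_t}$ for a constant $c>0$, valid whenever $\Phi_t$ exceeds a threshold $\Lambda_1$ of order $\eta^2$.

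To turn this drift into a pathwise conclusion I would use concavity of $\sqrt{\cdot}$ to write $\sqrt{\Phi_{t+1}}\le\sqrt{\Phi_t}-\Delta_t/(2\sqrt{\Phi_t})$; taking conditional expectations, telescoping, and using $\mathbb E[\sqrt{\Phi_{T+1}}]\ge0$ gives $\sum_{t=1}^T\Pr[\Phi_t\ge\Lambda_1]\le 2\sqrt d/(c\eta)$. Because $\Phi_t$ is monotone, $\{\Phi_t\ge\Lambda_1\}$ is a decreasing family of events, so this sum equals $\mathbb E[\min\{\tau-1,T\}]$ for the first crossing time $\tau:=\min\{t:\Phi_t<\Lambda_1\}$; a Markov bound then yields $\Pr[\tau>T]\le 2\sqrt d/(c\eta T)\to0$, which is precisely where the hypothesis $\eta T\to\infty$ is used. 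On the event $\{\tau\le T\}$, monotonicity forces $\Phi_{T+1}\le\Lambda_1=O(\eta^2)$, whence $\|Z_{T+1}\|_\op\le\|Z_{T+1}\|_F=O_P(\eta)$, comfortably within the claimed bound.

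It remains to control $E=\sum_t w_t(\tilde g_t-g_t)^\top$. Using $\|w_t\|_2\le\eta$ (enforced by Step~\ref{step:tilde-wt}) and the Lipschitz continuity of $\theta\mapsto\nabla_\theta f(p,x;\theta)$ on the compact set $\Theta$, I would bound $\|E\|_\op\le\sum_{t=1}^T\|w_t\|_2\|\tilde g_t-g_t\|_2\le \eta L\sum_{t=1}^T\big(\|\hat\theta_t^\p-\theta_0\|_2+\|\hat\theta^\p-\theta_0\|_2\big)$. Invoking Lemma~\ref{lem:pilot} in a form uniform over $t$ (via a maximal inequality) gives both $\sum_{t=1}^T\|\hat\theta_t^\p-\theta_0\|_2=O_P(\sqrt{T\log T})$ and $T\|\hat\theta^\p-\theta_0\|_2=O_P(\sqrt{T\log T})$, so that $\|E\|_\op=O_P(\eta\sqrt T)$ up to a logarithmic factor absorbed into the stated rate; adding the two contributions proves the lemma. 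The main obstacle is the second step: establishing the one-step drift requires handling the two (projection versus clipping) regimes of the update and applying Assumption~(D1) with the gradient evaluated at the data-dependent $\hat\theta_t^\p$ rather than at $\theta_0$, and then converting this conditional drift — defined on the filtration generated by the adaptively chosen prices and contexts — into the pathwise bound $\Phi_{T+1}=O_P(\eta^2)$ through the stopping-time argument above.
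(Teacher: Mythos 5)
Your proof follows essentially the same route as the paper's: the identical decomposition $I_d-WG=Z_{T+1}+\sum_{t}w_t(\tilde g_t-g_t)^\top$, the same monotone Frobenius-potential argument driven by Assumption (D1) to get $\|Z_{T+1}\|=O_P(\eta)$, and the same bound on the gradient-mismatch term via $\|w_t\|_2\le\eta$ and Lemma \ref{lem:pilot}. Your drift/stopping-time bookkeeping for the potential is just a more careful formalization of the paper's ``first time $T_0$ the projection is small, else contradiction by telescoping expectations'' step; there is no substantive difference in approach.
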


The proof of Lemma \ref{lem:diff-iwg} can be roughly divided into two steps: the first step is to prove that $\|Z_{T+1}\|_\op$ is sufficiently small
under the assumed $\eta$ scaling in Lemma \ref{lem:diff-iwg}, and the second step is to upper bound the discrepancy between $G=(\nabla_\theta f(p_t,x_t;\theta_0))_t$
and its estimate $\hat G=(\nabla_\theta f(p_t,x_t;\hat\theta_t^\p))_t$.
The complete proof is given below.

\begin{proof}{Proof of Lemma \ref{lem:diff-iwg}.}
For clarity we use the symbol $w_t$ for the vector computed at Step \ref{step:wt} of Algorithm \ref{alg:whitening},
and $\tilde w_t$ for the normalized vector after Step \ref{step:tilde-wt} of Algorithm \ref{alg:whitening}.
For every $t\leq T$ define $u_t := \nabla_\theta f(p_t,x_t;\hat\theta_t^\p)$
and $Z_t := I_{d\times d} - \sum_{t'<t}\tilde w_tu_t^\top$, which coincides with the $Z$ matrix at the beginning of iteration $t$.
According to Algorithm \ref{alg:whitening}, $w_t=(Z_tu_t)/\|u_t\|_2^2$ is the projection of $Z_t$ onto the direction of $u_t$.
Moreover, $Z_{t+1}$ can be written as $Z_{t+1} = Z_t - \tilde w_tu_t^\top$, where $\tilde w_t=w_t$ if $\|w_t\|_2\leq\eta$
and $\tilde w_t = \eta w_t/\|w_t\|_2$ if $\|w_t\|_2>\eta$.
Using the Pythagorean theorem we have that
\begin{equation}
\|Z_t\|_F^2 - \|Z_{t+1}\|_F^2 = \|\tilde w_tu_t^\top\|_F^2 = \|\tilde w_t\|_2^2\|u_t\|_2^2.
\label{eq:pythagorean}
\end{equation}

Define $R(Z_t,u_t) := \|Z_tu_t\|_2/\|u_t\|_2 = \sqrt{u_t^\top (Z_tZ_t^\top)u_t/\|u_t\|_2^2}$, 
which is always between $\sigma_{\min}(Z_t)$ and $\sigma_{\max}(Z_t)$ (the smallest and largest singular values of $Z_t$).
The case of $\|w_t\|_2>\eta$ corresponds to $R(Z_t,u_t)/\|u_t\|_2>\eta$.
In this case, because $\|\tilde w_t\|_2 = \eta$ we have that $\|Z_t\|_F^2-\|Z_{t+1}\|_F^2 = \eta\|u_t\|_2$. Or more specifically,
\begin{align}
\|Z_t\|_F^2 - \|Z_{t+1}\|_F^2 \geq  \eta\|u_t\|_2\times \vct 1\{R(Z_t,u_t)>\eta\|u_t\|_2\}.
\label{eq:zdiff}
\end{align}

Now let $T_0\leq T$ be the smallest integer such that $R(Z_{T_0},u_{T_0})\leq\eta\|u_{T_0}\|_2$.
If such a $T_0$ exists, then
\begin{equation}
\|Z_{T+1}\|_\op \leq \|Z_{T_0}\|_{\op} \leq R(Z_{T_0},u_{T_0}) \leq \eta\|u_{T_0}\|_2 \leq O(\eta),
\label{eq:zfinal-case1}
\end{equation}
where the first equality holds because the right-hand side of Eq.~(\ref{eq:pythagorean}) is always non-negative,
and the last inequality holds thanks to Assumption (A2) that $\|u_{T_0}\|_2$ are bounded.
We next show that such a $T_0$ always exists for sufficiently large $T$.
Assume the contrary. Then by telescoping both sides of Eq.~(\ref{eq:zdiff}) from $t=1$ to $t=T$ we have
\begin{equation}
\mathbb E[\|Z_{T+1}\|_F^2] \leq d - \eta\mathbb E\left[\sum_{t=1}^T\mathbb E[\|u_t\|_2|\mathcal F_{t-1},p_t,x_t]\right] \leq d - \Omega(\eta T),
\label{eq:zfinal-case2}
\end{equation}
where $\mathcal F_{t-1}=\{(d_{t'},p_{t'},x_{t'})\}_{t'<t}$ and the last inequality holds thanks to Assumption (D1).
Since $\eta T\to\infty$, Eq.~(\ref{eq:zfinal-case2}) suggests that $\mathbb E[\|Z_{T+1}\|_F^2]<0$ for sufficiently large $T$, which is the desired contradiction.

With Eq.~(\ref{eq:zfinal-case1}), it remains to upper bound the discrepancy between $Z_{T+1}$ and $I-WG$.
By definition, $Z_{T+1} = I - \sum_{t=1}^T w_t\otimes \nabla_\theta f(p_t,x_t;\hat\theta_t^\p)$ and $I-WG=I - \sum_{t=1}^Tw_t\otimes \nabla_\theta f(p_t,x_t;\hat\theta^p)$, where $a\otimes b$ means the outer product $ab^\top$.
Hence,
\begin{equation}
\|Z_{T+1} - (I-WG)\|_\op \leq \sum_{t=1}^T \|w_t\|_2\times O(\|\hat\theta_t^\p-\hat\theta^\p\|_2) \leq \eta\times\sum_{t=1}^T O(\|\hat\theta_t^\p-\theta_0\|_2)
\leq O(\eta\sqrt{T}).
\label{eq:ziwg}
\end{equation}
Combining Eqs.~(\ref{eq:zfinal-case1},\ref{eq:ziwg}) we have
$$
\|I-WG\|_\op \leq O(\eta + \eta\sqrt{T}) = O(\eta\sqrt{T}),
$$
which is to be demonstrated. $\square$
 \end{proof}


With Lemma \ref{lem:diff-iwg} (and Lemma \ref{lem:pilot} for pilot estimator), it is easy to establish the following corollary showing that all conditions of Theorem \ref{thm:asymptotic-distribution}
are satisfied with appropriate scaling of $\eta$. The proof will be deferred to the supplementary material.

\begin{corollary}
Suppose all assumptions in Sec.~\ref{sec:assumption} hold true and $\eta T\to\infty$, $\eta T^{1/2+\delta}\to0$ for some $\delta>0$.
Then all conditions of Theorem \ref{thm:asymptotic-distribution} is satisfied.
\label{cor:condition-main}
\end{corollary}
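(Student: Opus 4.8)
The plan is to verify the three conditions of Theorem~\ref{thm:asymptotic-distribution} one at a time, feeding in the two quantitative inputs already proved, namely the pilot rate $\|\hat\theta^\p-\theta_0\|_2=O_P(\sqrt{d\log T/T})$ from Lemma~\ref{lem:pilot} (with $t=T+1$) and the whitening discrepancy $\|I_d-WG\|_\op=O_P(\eta\sqrt T)$ from Lemma~\ref{lem:diff-iwg}, and then combining rates. The first two conditions are essentially structural. Condition~1 (non-anticipativity) holds by the construction in Algorithm~\ref{alg:whitening}: each $w_t$ is a function only of the running matrix $Z$ at the start of iteration $t$ (which is built from $\{(p_{t'},x_{t'},d_{t'})\}_{t'<t}$ through $\hat\theta_{t'}^\p$ and $w_{t'}$) and of $\nabla_\theta f(p_t,x_t;\hat\theta_t^\p)$, where $\hat\theta_t^\p$ uses only data strictly before $t$; hence $w_t$ is measurable with respect to the required filtration. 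Condition~2 follows from the explicit cap $\|w_t\|_2\le\eta$ enforced in Step~\ref{step:tilde-wt}, which gives the deterministic bound $\sum_{t=1}^T\|w_t\|_2^3\le T\eta^3$; since $\eta=o(T^{-1/2-\delta})$ we have $T\eta^3=o(T^{-1/2-3\delta})\to0$, so $\mathbb E[\sum_t\|w_t\|_2^3]\to0$.

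The substance is Condition~3, where I must control the ratio in~\eqref{eq:bias-condition}. For the numerator I would substitute the two lemmas directly: $\|I_d-WG\|_\op\|\hat\theta^\p-\theta_0\|_2=O_P(\eta\sqrt T\cdot\sqrt{d\log T/T})=O_P(\eta\sqrt{d\log T})$ and $\|\hat\theta^\p-\theta_0\|_2^2=O_P(d\log T/T)$. The denominator requires a lower bound on $\lambda_{\min}(WDW^\top)$. Here I would first invoke Assumption~(B2), which yields $D\succeq(\inf_{p,x}\nu(p,x;\theta_0))^2 I_T$, reducing the task to lower bounding $\lambda_{\min}(WW^\top)$. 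For the latter I would exploit that $WG$ is close to the identity: since $\|I_d-WG\|_\op=O_P(\eta\sqrt T)=o_P(1)$, with probability tending to one we have $a^\top WG\,a\ge 1/2$ for every unit vector $a\in\mathbb R^d$, whence by Cauchy--Schwarz $\|W^\top a\|_2\ge 1/(2\|Ga\|_2)\ge 1/(2\|G\|_\op)$. Assumption~(A2) (bounded gradients over the compact domain of~(A1)) gives $\|G\|_\op\le\sqrt T\,C$, so $\lambda_{\min}(WW^\top)=\min_{\|a\|_2=1}\|W^\top a\|_2^2\ge 1/(4C^2T)$, and therefore $\sqrt{\lambda_{\min}(WDW^\top)}=\Omega_P(T^{-1/2})$.

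Finally I would assemble the rates. On the high-probability event where the above bounds hold, the denominator $\min\{1,\sqrt{\lambda_{\min}(WDW^\top)}\}$ equals $\sqrt{\lambda_{\min}(WDW^\top)}=\Omega(T^{-1/2})$ for large $T$. The first ratio is then $O_P(\eta\sqrt{d\log T}\cdot\sqrt T)=O_P(\eta\sqrt T\cdot\sqrt{d\log T})$; since $\eta\sqrt T=o(T^{-\delta})$ and $d$ is treated as constant, this vanishes. The second ratio is $O_P((d\log T/T)\cdot\sqrt T)=O_P(d\log T/\sqrt T)\to0$. Both terms of the maximum in~\eqref{eq:bias-condition}, divided by the denominator, thus converge to zero in probability, which is exactly the third condition of Theorem~\ref{thm:asymptotic-distribution} and completes the verification.

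I expect the main obstacle to be the lower bound on $\lambda_{\min}(WDW^\top)$: the whitening columns $w_t$ point in data-dependent, $Z_t$-rotated directions, so a direct ``spread of directions'' argument showing the sharp scaling $\lambda_{\min}(WW^\top)\asymp T\eta^2$ would be delicate. The key simplification I would rely on is that such sharpness is unnecessary; the crude bound $\Omega(1/T)$, obtained cleanly from $WG\approx I_d$ together with $\|G\|_\op=O(\sqrt T)$, already dominates the numerator throughout the stated window $\eta T\to\infty$, $\eta T^{1/2+\delta}\to0$. Checking that this crude bound simultaneously controls both numerator terms is where the rate bookkeeping must be done carefully.
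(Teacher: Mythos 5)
Your proposal is correct and follows essentially the same route as the paper's proof: both treat the first two conditions as direct consequences of the whitening construction and the cap $\|w_t\|_2\le\eta$, and both handle the third condition by combining Lemma \ref{lem:diff-iwg} and Lemma \ref{lem:pilot} for the numerator with the lower bound $\sqrt{\lambda_{\min}(WDW^\top)}=\Omega_P(1/\sqrt{T})$ obtained from $WG\approx I_d$, $\|G\|_\op=O(\sqrt T)$, and Assumption (B2). Your Cauchy--Schwarz derivation of $\lambda_{\min}(WW^\top)\ge 1/(4C^2T)$ is just a slightly more explicit version of the paper's singular-value inequality $\sigma_d(W)\ge\sigma_d(WG)/\|G\|_\op$, and your rate bookkeeping matches the paper's.
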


\subsection{Construction of confidence intervals}\label{sec:level-alpha}

In this section we justify the construction of point-wise and uniform confidence intervals in Algorithm \ref{alg:debiased-confidence}.
In Step \ref{step:pointwise-ci} of Algorithm \ref{alg:debiased-confidence},
we  construct ``point-wise'' confidence intervals for the expected demand on a \emph{fixed pair} of price $p$ and context vector $x$.
The following theorem shows that the constructed confidence interval $[\ell_\alpha^\debiased(p,x),u_\alpha^\debiased(p,x)]$
is asymptotically accurate as $T\to\infty$.
\begin{theorem}
For any given $\alpha\in(0,1)$, let $[\ell_\alpha^\debiased(p,x),u_\alpha^\debiased(p,x)]$ be constructed
as in Step \ref{step:pointwise-ci} of Algorithm \ref{alg:debiased-confidence}.
Suppose also that all assumptions listed in Sec.~\ref{sec:assumption} hold, 
and the parameter $\eta$ satisfies $\eta T\to\infty$ and $\eta T^{1/2+\delta}\to 0$ for some $\delta>0$.
Then
$$
\lim_{T\to\infty}\Pr\left[\ell_\alpha^\debiased(p,x) \leq f(p,x;\theta_0)\leq u_\alpha^\debiased(p,x)\right] = 1-\alpha.
$$
\label{thm:pointwise-validity}
\end{theorem}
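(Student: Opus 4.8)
The plan is to reduce the coverage statement directly to the asymptotic normality of the studentized prediction error, which has already been established in Corollary \ref{cor:predict-asymptotic-distribution}. The only genuinely new work is (i) checking that the hypotheses feeding that corollary are in force under the stated $\eta$ scaling, and (ii) converting a convergence-in-distribution statement into convergence of a specific coverage probability. First I would verify the premises: the parameter choice $\eta T\to\infty$ and $\eta T^{1/2+\delta}\to0$ is exactly the scaling assumed in Corollary \ref{cor:condition-main}, so all three conditions of Theorem \ref{thm:asymptotic-distribution} hold and $(W D W^\top)^{-1/2}(\hat\theta^\d-\theta_0)\overset{d}{\to}\mathcal N(0,I_d)$. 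Combined with Assumptions (A1)--(D1), the hypotheses of Corollary \ref{cor:predict-asymptotic-distribution} are then met for the fixed pair $(p,x)$, yielding
\[
\frac{f(p,x;\hat\theta^\d)-f(p,x;\theta_0)}{\hat\sigma_{px}^\d}\overset{d}{\to}\mathcal N(0,1),
\]
with $\hat\sigma_{px}^\d$ as defined in Step \ref{step:pointwise-ci}. Note that $\hat\sigma_{px}^\d>0$ with probability tending to one, since Assumption (B2) bounds $\nu$ away from zero, so the studentized ratio is eventually well defined and asymptotically nondegenerate.

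Next I would rewrite the coverage event in terms of this ratio. By the definition of the interval edges $f(p,x;\hat\theta^\d)\pm z_{\alpha/2}\hat\sigma_{px}^\d$ in Step \ref{step:pointwise-ci}, the event $\{\ell_\alpha^\debiased(p,x)\le f(p,x;\theta_0)\le u_\alpha^\debiased(p,x)\}$ is identical to
\[
\left\{\left|\frac{f(p,x;\hat\theta^\d)-f(p,x;\theta_0)}{\hat\sigma_{px}^\d}\right|\le z_{\alpha/2}\right\}.
\]
Denoting the ratio by $R_T$ and letting $Z\sim\mathcal N(0,1)$, I would apply the portmanteau theorem to the convergence $R_T\overset{d}{\to}Z$: since $[-z_{\alpha/2},z_{\alpha/2}]$ has boundary $\{\pm z_{\alpha/2}\}$, which is a null set under the (everywhere continuous) standard normal law, this interval is a continuity set and hence $\Pr[|R_T|\le z_{\alpha/2}]\to\Pr[|Z|\le z_{\alpha/2}]$. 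By the definition $z_{\alpha/2}=\Phi^{-1}(1-\alpha/2)$ we have $\Pr[|Z|> z_{\alpha/2}]=\alpha$, so $\Pr[|Z|\le z_{\alpha/2}]=1-\alpha$, which is exactly the claimed limit.

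I do not anticipate a serious obstacle, since the substantive analysis --- the martingale CLT for $W\vct\xi$, the bias control supplied by the whitening construction, and the delta-method step --- is already packaged in Theorem \ref{thm:asymptotic-distribution} and Corollary \ref{cor:predict-asymptotic-distribution}. The only point requiring mild care is the passage from distributional convergence of the studentized statistic to the exact asymptotic level, which hinges on the continuity of the limiting normal CDF at $\pm z_{\alpha/2}$ and on $\hat\sigma_{px}^\d$ remaining bounded away from zero; both are secured by the assumptions above, so the proof is essentially a translation of the earlier results into the coverage language of \eqref{eq:defn-ci}.
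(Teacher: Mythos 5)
Your argument is correct and follows the same route as the paper, which states that Theorem \ref{thm:pointwise-validity} follows directly from Corollary \ref{cor:predict-asymptotic-distribution} (with the $\eta$ scaling supplying the hypotheses via Corollary \ref{cor:condition-main}). You simply spell out the routine last step — rewriting the coverage event as $\{|R_T|\le z_{\alpha/2}\}$ and invoking continuity of the standard normal CDF — which the paper leaves implicit.
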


\begin{figure}[t]
\centering
\includegraphics[width=0.32\textwidth]{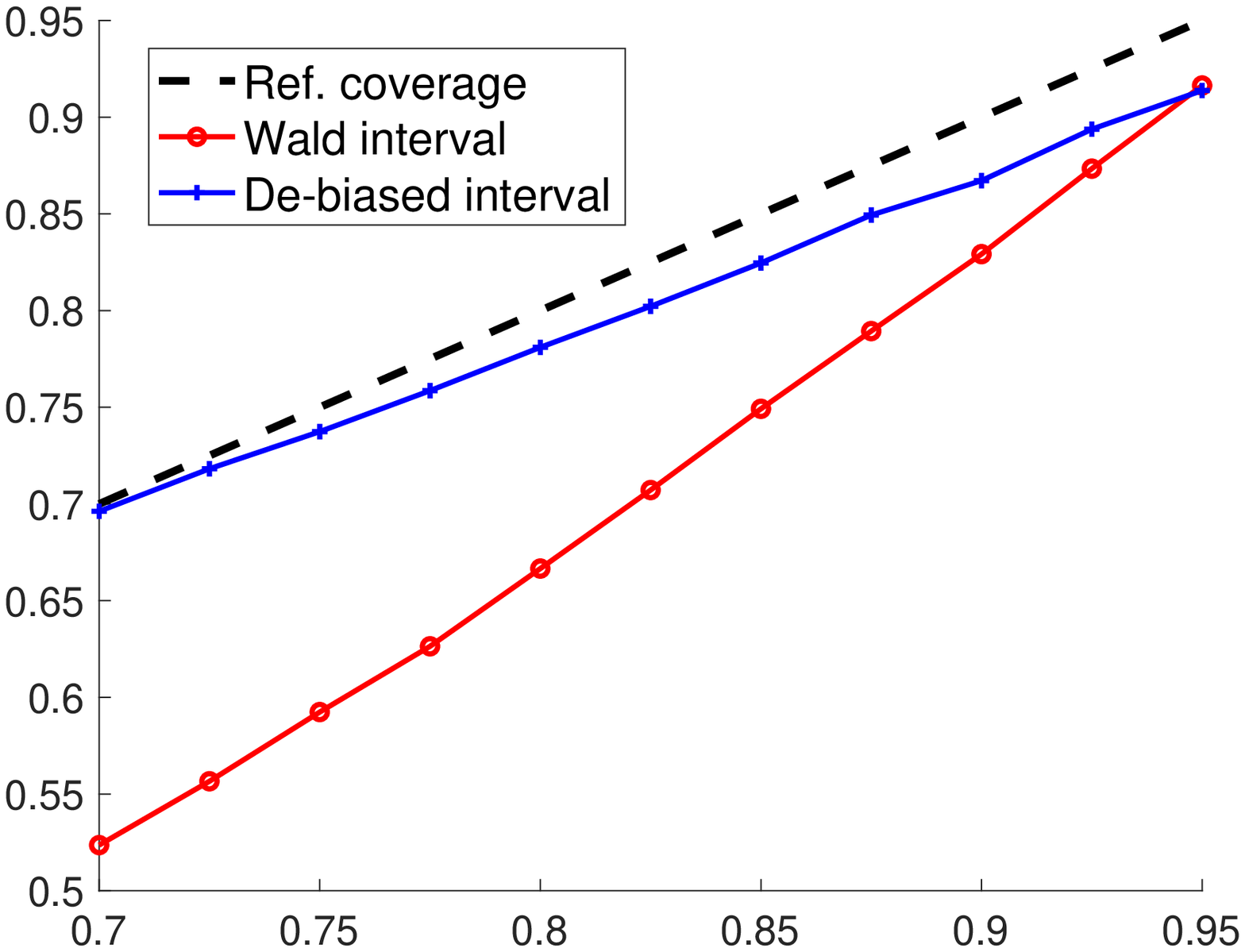}
\includegraphics[width=0.32\textwidth]{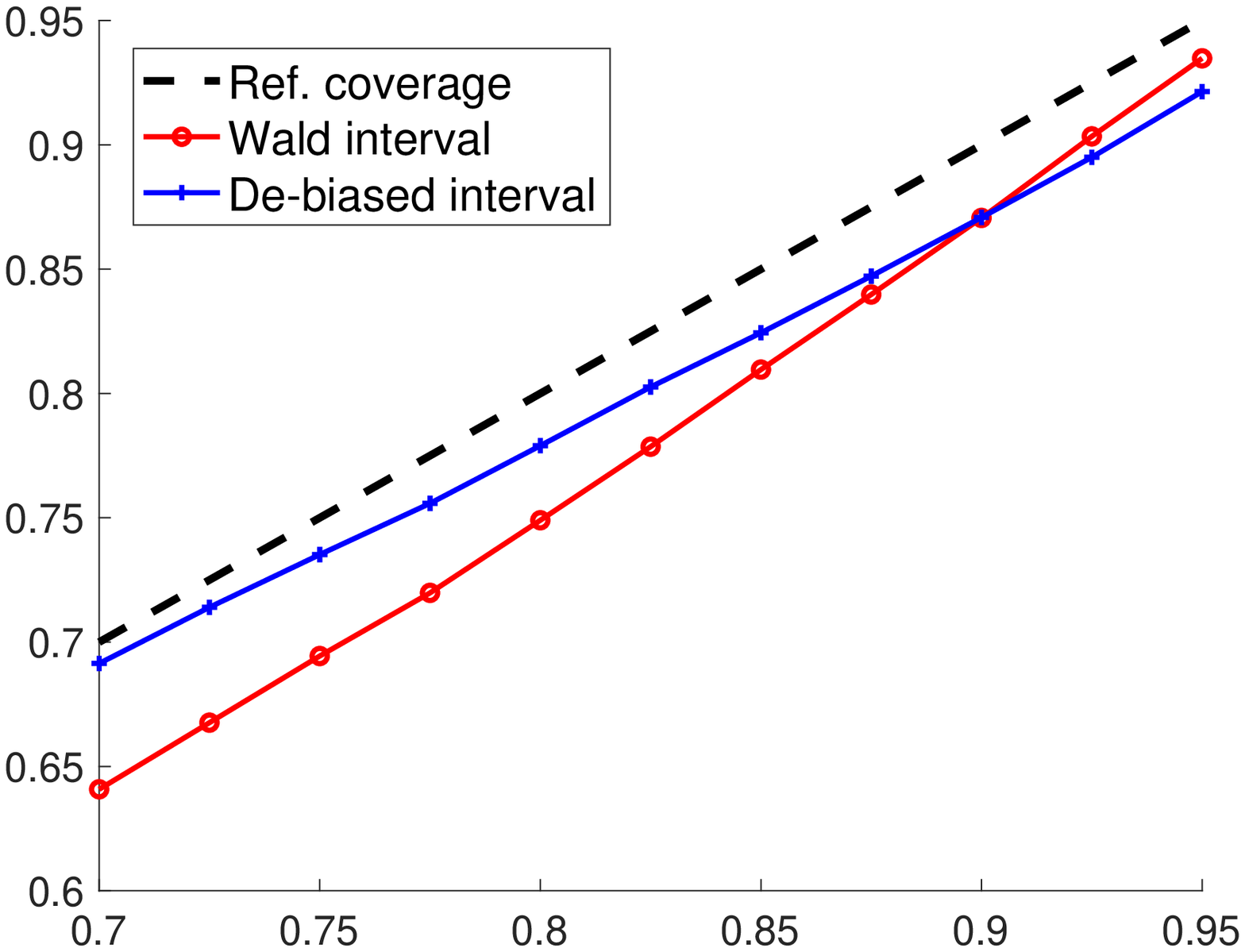}
\includegraphics[width=0.32\textwidth]{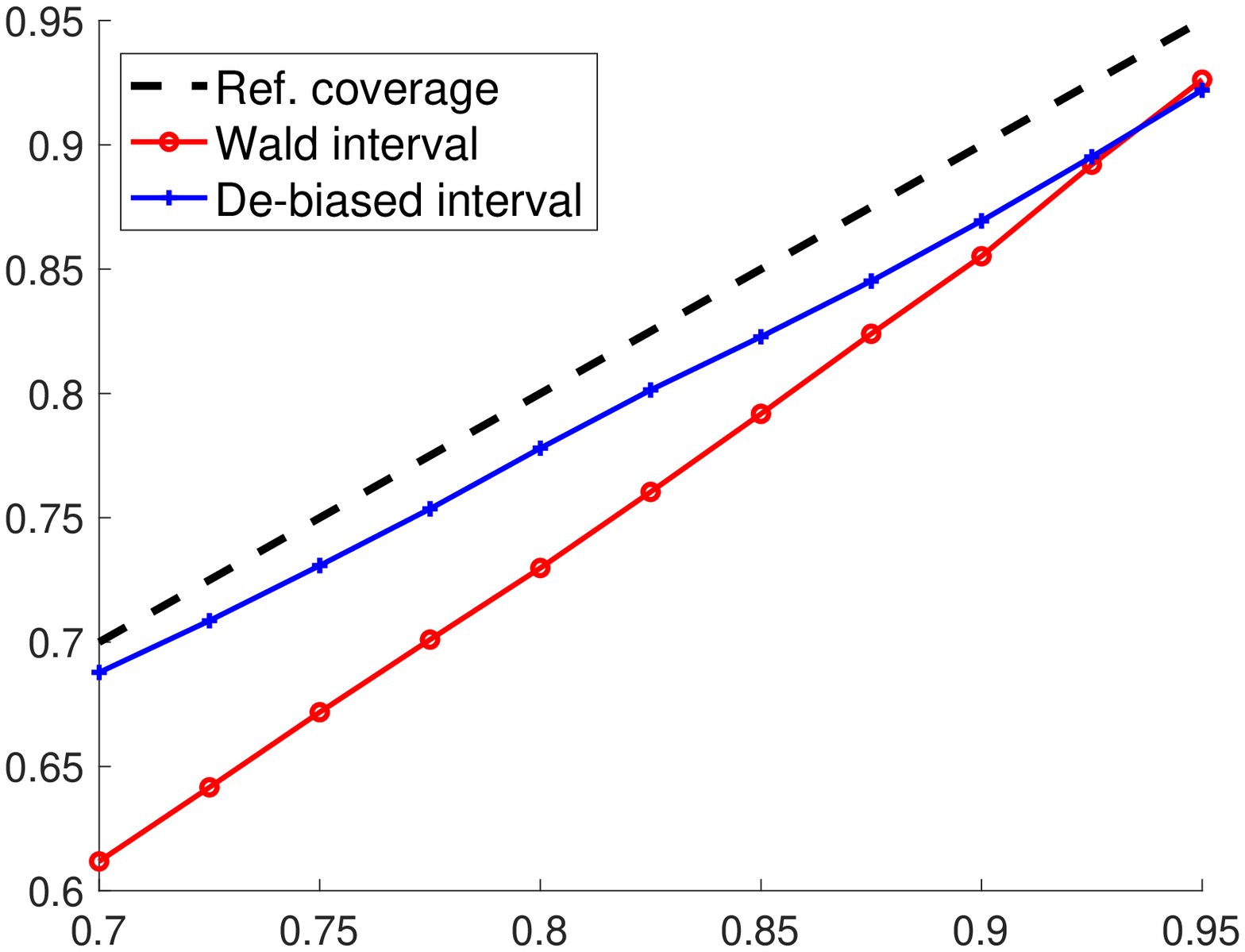}
\caption{Coverage results for the Wald's interval approach (the red curves) and the debiased approach (the blue curves)
for confidence intervals of $f(p,x;\theta_0)$ with confidence levels $(1-\alpha)$ ranging from $0.7$ to $0.95$. {The $x$-axis is the targeting $(1-\alpha)$ confidence level and $y$-axis is the empirical coverage rate over 5,000 independent trials.
The dashed black curves indicate the perfect coverage, i.e., $y=x$. The red curves are  the classical confidence intervals using the Wald's approach and the blue curves are the intervals using the de-biased approach.}
From the leftmost column to the rightmost column,
coverage rates are reported for the given price/context settings of $(p,x)=(0.5,0)$, $(0.5,1)$ and $(1,1)$.}
\label{fig:pred-calib}
\end{figure}

Theorem \ref{thm:pointwise-validity} directly follows from Corollary \ref{cor:predict-asymptotic-distribution} in Sec.~\ref{sec:debias},
which establishes that $[f(p,x;\hat\theta^\d)-f(p,x;\theta_0)]/\hat\sigma_{px}^\d$ converges in distribution to $\mathcal N(0,1)$.
To verify the validity of the constructed confidence intervals $[\ell_\alpha^\debiased,u_\alpha^\debiased]$ numerically,
we plot the calibration results for both the Wald's interval and the de-biased approach with confidence levels $1-\alpha\in[0.7,0.95]$ using the experimental setting in Figure \ref{fig:wald}. The results are shown in Figure \ref{fig:pred-calib}.
For each confidence level $1-\alpha\in[0.7,0.95]$, we report the coverage rates (i.e., the relative frequency of $f(p,x;\theta_0)$ falling into the constructed
confidence intervals) for both approaches.
The closer the coverage is to the target confidence level $1-\alpha$, the more accurate the constructed confidence intervals are.

As we can see in Figure \ref{fig:pred-calib},
the baseline method (built on Wald's intervals) suffers from significant
under-coverage, with the coverage at level $1-\alpha=0.7$ sometimes even below $0.55$.
On the other hand, the under-coverage effect of our proposed de-biased approach is minimal and most of the time upper bounded by $5\%$, making it
significantly more accurate compared to the baseline method.


Apart from point-wise confidence intervals, in practical applications it is also important to construct confidence intervals for the \emph{entire}
demand function $f(\cdot,\cdot;\theta_0)$, so that the expected demand of \emph{any} incoming customer and \emph{any} offered price can be
effectively quantified.
Our next theorem validates the accuracy of the \emph{uniform} confidence intervals $[L_\alpha^\debiased(\cdot,\cdot),U_\alpha^\debiased(\cdot,\cdot)]$
constructed in Step \ref{step:uniform-ci} of our proposed Algorithm \ref{alg:debiased-confidence}.
\begin{theorem}
For any given  $\alpha\in(0,1)$, let $[L_\alpha^\debiased(\cdot,\cdot),U_\alpha^\debiased(\cdot,\cdot)]$ be constructed
as in Step \ref{step:uniform-ci} of Algorithm \ref{alg:debiased-confidence}.
Suppose also that all assumptions listed in Sec.~\ref{sec:assumption} hold,
and the parameter $\eta$ satisfies $\eta T\to\infty$ and $\eta T^{1/2+\delta}\to 0$ for some $\delta>0$.
Then
$$
\lim_{T\to\infty}\lim_{M\to\infty}\Pr\left[\forall p\in[p_{\min},p_{\max}],  \forall x\in\mathcal X, L_\alpha^\debiased(p,x) \leq f(p,x;\theta_0)\leq U_\alpha^\debiased(p,x)\right]
= 1-\alpha.
$$
where $M$ is the number of Monte-Carlo samples used in Step \ref{step:uniform-ci} of Algorithm \ref{alg:debiased-confidence}.
\label{thm:uniform-validity}
\end{theorem}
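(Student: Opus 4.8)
The plan is to reduce the uniform coverage statement to a Gaussian approximation for the supremum of a linear functional of the $d$-dimensional error $\hat\theta^\d-\theta_0$, and then argue that the Monte-Carlo quantile $s_\alpha$ targets exactly the $(1-\alpha)$-quantile of that supremum. First I would dispose of the inner limit $M\to\infty$: conditionally on the data, $a(1),\dots,a(M)$ are i.i.d.\ draws of $\hat\Psi(\zeta):=\sup_{p,x}|\langle\nabla_\theta f(p,x;\hat\theta^\p),\zeta\rangle|$ with $\zeta\sim\mathcal N(0,\hat\Sigma)$, $\hat\Sigma:=W\hat DW^\top$, so by the strong law for empirical quantiles $s_\alpha$ converges almost surely to the population $(1-\alpha)$-quantile $\hat q_\alpha$ of $\hat\Psi(\zeta)$. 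This limit is legitimate because the law of the supremum of a non-degenerate Gaussian process over the compact index set $[p_{\min},p_{\max}]\times\mathcal X$ has no atoms; non-degeneracy of $\hat\Sigma$ holds for large $T$ since $WG\approx I_d$ (Lemma \ref{lem:diff-iwg}) combined with the uniform lower bound on $\nu$ in Assumption (B2).

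After this reduction it suffices to show $\Pr[\,S_T\le \hat q_\alpha\,]\to 1-\alpha$, where $S_T:=\sup_{p,x}|f(p,x;\hat\theta^\d)-f(p,x;\theta_0)|$. The next step is a \emph{uniform} first-order expansion: by the mean-value theorem and uniform continuity of $\nabla_\theta f$ on the compact set of $(p,x,\theta)$ (Assumptions (A1)--(A2)), I would write, uniformly in $(p,x)$, $f(p,x;\hat\theta^\d)-f(p,x;\theta_0)=\langle\nabla_\theta f(p,x;\theta_0),\hat\theta^\d-\theta_0\rangle+r(p,x)$ with $\sup_{p,x}|r(p,x)|=o_P(\|\hat\theta^\d-\theta_0\|_2)$. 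Since Assumption (D1) forces the gradient set to span $\mathbb R^d$, the seminorm $\Psi(v):=\sup_{p,x}|\langle\nabla_\theta f(p,x;\theta_0),v\rangle|$ is equivalent to the Euclidean norm, whence $S_T=\Psi(\hat\theta^\d-\theta_0)\,(1+o_P(1))$, with the bias term $b$ of Lemma \ref{lem:debias-basic} absorbed exactly as in the proof of Theorem \ref{thm:asymptotic-distribution}.

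The core of the argument is a Gaussian approximation for $\Psi(\hat\theta^\d-\theta_0)$ under the \emph{random} normalization $\Sigma:=WDW^\top$. The sets $\{v:\Psi(v)\le s\}$ are symmetric convex bodies, and because $d$ is held fixed I would invoke a multivariate CLT over convex sets (a martingale analogue of the Bentkus/Berry--Esseen bound) for the predictable martingale sum $v=W\vct\xi=\sum_t\xi_t w_t$ underlying Theorem \ref{thm:asymptotic-distribution}; this upgrades the marginal convergence of that theorem to $\sup_s|\Pr[\Psi(v)\le s]-\Pr_\zeta[\Psi(\zeta)\le s\mid \text{data}]|\to 0$, where $\zeta\sim\mathcal N(0,\Sigma)$ shares the data-dependent covariance $\Sigma$. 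I would then transfer from the idealized quantile $q_\alpha$ of $\Psi(\zeta)$ to the computable $\hat q_\alpha$: Lemma \ref{lem:pilot} and the Lipschitz property of $\nu$ (Assumption (B2)) give $\sup_{p,x}\|\nabla_\theta f(p,x;\hat\theta^\p)-\nabla_\theta f(p,x;\theta_0)\|_2\to0$ and $\|\hat\Sigma-\Sigma\|_{\op}\to 0$ in probability, and a Gaussian--Gaussian comparison (Sudakov--Fernique together with anti-concentration of Gaussian suprema) shows these vanishing perturbations move the quantile by $o_P(1)$, i.e.\ $\hat q_\alpha=q_\alpha+o_P(1)$.

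Finally I would combine the pieces: the Gaussian approximation gives $\Pr[\Psi(\hat\theta^\d-\theta_0)\le q_\alpha]\to 1-\alpha$ since $q_\alpha$ is by definition the $(1-\alpha)$-quantile of $\Psi(\zeta)$, and an anti-concentration bound (the CDF of the Gaussian supremum admits a bounded density near $q_\alpha$) lets me absorb both the quantile shift $\hat q_\alpha-q_\alpha=o_P(1)$ and the multiplicative remainder from the Taylor step without altering the limit. I expect the main obstacle to be precisely the step where the random, shrinking covariance $\Sigma=WDW^\top$ is entangled with the martingale $v$: one cannot simply condition on the history to freeze $\Sigma$ while keeping $v$ exactly Gaussian, so the Gaussian approximation must be carried out uniformly over symmetric convex sets with the normalization treated as data-dependent, and the anti-concentration constant must be controlled uniformly in $\Sigma$ (using the eigenvalue control implied by Lemma \ref{lem:diff-iwg} and Assumptions (B2) and (D1)). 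The remaining steps---the $M\to\infty$ reduction, the uniform Taylor expansion, and the covariance consistency---are comparatively routine given the earlier results.
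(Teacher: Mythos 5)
Your proposal is correct and follows essentially the same route as the paper's proof: a uniform first-order Taylor expansion reducing the event to $\sup_{p,x}|\langle\nabla_\theta f(p,x;\cdot),\hat\theta^\d-\theta_0\rangle|$ falling below a quantile, the asymptotic normality of $(W\hat DW^\top)^{-1/2}(\hat\theta^\d-\theta_0)$ from Corollary \ref{cor:predict-asymptotic-distribution}, consistency of the Monte-Carlo quantile via the law of large numbers, and absorption of the $O_P(d\log T/T)$ bias. The only difference is that you explicitly supply the convex-set Gaussian approximation and anti-concentration arguments needed to pass from the distributional limit to convergence of the coverage probability at a data-dependent quantile, a step the paper's proof asserts without elaboration.
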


\begin{figure}[t]
\centering
\includegraphics[width=0.5\textwidth]{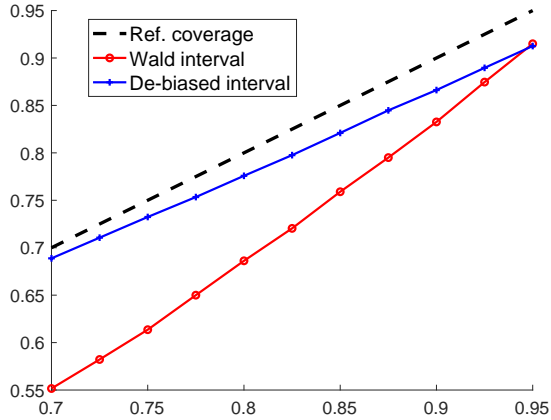}
\caption{Coverage results for the Wald's interval approach (the red curves) and the debiased approach (the blue curves)
for confidence intervals \emph{uniformly} over all $p\in[0,1]$ and $x\in[-1,1]$,
with the confidence level $1-\alpha$ ranging from $0.7$ to $0.95$. {The $x$-axis is the targeting $(1-\alpha)$ confidence level and $y$-axis is the empirical coverage rate over 5,000 independent trials.
	The dashed black curve indicates the perfect coverage, i.e., $y=x$. The red curve is  the classical confidence intervals using the Wald's approach and the blue curve is the intervals using the de-biased approach.} }
\label{fig:pred-uniform-calib}
\end{figure}

Comparing Theorem \ref{thm:uniform-validity} with Theorem \ref{thm:pointwise-validity},
the major difference is the $\forall$-quantifier (i.e., \emph{for all}) \emph{inside} the probability statement,
meaning that (with probability $1-\alpha$) the constructed confidence intervals $L_\alpha^{\debiased},U_\alpha^{\debiased}$
hold \emph{uniformly} for all offered prices and customers' context vectors.
The proof of Theorem \ref{thm:uniform-validity} leverages the law of large numbers to justify the Monte-Carlo procedure,
which is placed in the supplementary material.

Similar to Figure \ref{fig:pred-calib}, we also use numerical simulations to verify the accuracy of the constructed uniform confidence intervals
in Figure \ref{fig:pred-uniform-calib}.
In this case, successful coverage is defined as $L_\alpha^\debiased(p,x)\leq f(p,x;\theta_0)\leq U_\alpha^\debiased(p,x)$
for \emph{all} $p\in[0,1]$ and $x\in[-1,1]$. { We report the empirical coverage rate as the success rate over 5,000 independent trials,
with $M=2000$ Monte-Carlo samples per trial.}
We can see that again the Wald's approach deviates significantly from the desired coverage levels, while our proposed approach
is very close to the target $1-\alpha$ level and only slight under-coverage is observed.

\section{Conclusion and future directions}\label{sec:conclusion}

In this paper we proposed a de-biased approach to construct accurate confidence intervals for the unknown demand curve
based on dynamically adjusted prices and potentially sequentially/temporally correlated customer contexts. {We also illustrate that the traditional method for independent data leads to a significant bias, which is invalid for the construction of confidence intervals.}  The developed confidence intervals are asymptotically level-$(1-\alpha)$ (i.e., cover the true demand curve
with probability $1-\alpha$), which is verified both theoretically and numerically.

One potential future direction is to develop location-sensitive uniform confidence intervals for the demand curve $f(\cdot,\cdot;\theta_0)$.
In particular, if we compare the point-wise confidence intervals $[\ell_\alpha^\debiased,u_\alpha^\debiased]$
with the uniform ones $[L_\alpha^\debiased,U_\alpha^\debiased]$ constructed in Algorithm \ref{alg:debiased-confidence},
we can see that the confidence interval lengths $|\ell_\alpha^\debiased(p,x)-u_\alpha^\debiased(p,x)|$ differ for different price and context vector
pairs (since $\hat\sigma_{px}^\debiased$ depends on $p$ and $x$), while the lengths $|L_\alpha^\debiased(p,x)-U_\alpha^\debiased(p,x)|$
remain the same for all $p$ and $x$. 
It is thus an interesting question whether location-dependent confidence intervals (whose lengths depend on the particular values of $p,x$)
can be constructed \emph{uniformly} for the demand curve, satisfying $\Pr[\forall p,x, L_\alpha(p,x)\leq f(p,x;\theta_0)\leq U_\alpha(p,x)] \to 1-\alpha$.

\bibliographystyle{informs2014} 
\bibliography{refs} 

\ECSwitch


\ECHead{Proofs of Statements}

\section{Proof of Lemma \ref{lem:levy-continuity}.}
First, the characteristic function of the fresh sample $z\in\mathcal N(0,I_d)$ can be computed
as
$$
\mathbb E[\exp\{ia^\top z\}] = \mathbb E[\exp\{-\|a\|_2^2/2\}],
$$
using standard calculus.

We next analyze the characteristic function of $v=S_T$, or more specifically $\mathbb E[\exp\{ia^\top S_T\}]$.
For $t\leq T$, let $\mathcal F_{t-1} = \{d_{t'},p_{t'},x_{t'},\xi_{t'},w_{t'}\}_{t'<t}$ be the filtering at time $t$.
Define also $\Sigma_t := \sum_{t'\leq t} D_{t't'}w_{t'}w_{t'}^\top$ (by definition, $\Sigma_T=WDW^\top$).
For every $t\leq T$, by Taylor expansion we have that
\begin{align}
\exp\{ia^\top S_t\}
&= \exp\{ia^\top S_{t-1}\}\exp\{i\xi_t a^\top w_t\}\nonumber\\
&= \exp\{ia^\top S_{t-1}\}\left(1+i\xi_t a^\top w_t - \frac{\xi_t^2(a^\top w_t)^2}{2}\right) + O(\|a\|_2^3\|w_t\|_2^3|\xi_t|^3).
\label{eq:chist-taylor}
\end{align}
Subsequently,
\begin{align}
&\mathbb E\big[\exp\{ia^\top S_t +a^\top(\Sigma_t-\Sigma_T)a/2\}|\mathcal F_{t-1}\big]\nonumber\\
&= \exp\{ia^\top S_{t-1} + a^\top\Sigma_t a/2\}\times \mathbb E\left[\left(1+i\xi_t a^\top w_t - \frac{\xi_t^2(a^\top w_t)^2}{2}\right)\exp\{-a^\top\Sigma_T a/2\}\bigg|\mathcal F_{t-1}\right]\nonumber\\
&\;\;\;\;+ O(\|a\|_2^3\mathbb E[\|w_t\|_2^3|\mathcal F_{t-1}]) \label{eq:telescope-1}\\
&= \exp\{ia^\top S_{t-1} + a^\top\Sigma_t a/2\}\times\mathbb E\left[\left(1-\frac{\xi_t^2(a^\top w_t)^2}{2}\right)\exp\{-a^\top\Sigma_Ta/2\}\bigg|\mathcal F_{t-1}\right]
+ O(\mathbb E[\|w_t\|_2^3|\mathcal F_{t-1}]) \label{eq:telescope-2}\\
&= \exp\{ia^\top S_{t-1} + a^\top\Sigma_t a/2\}\times\mathbb E\left[\left(1-\frac{D_{tt}(a^\top w_t)^2}{2}\right)\exp\{-a^\top\Sigma_Ta/2\}\bigg|\mathcal F_{t-1}\right]
+ O(\mathbb E[\|w_t\|_2^3|\mathcal F_{t-1}]) \label{eq:telescope-3}\\
&=\exp\{ia^\top S_{t-1} + a^\top\Sigma_t a/2\}\times\mathbb E\left[\exp\left\{-\frac{a^\top\Sigma_T a}{2} - \frac{D_{tt}|a^\top w_t|^2}{2}\right\}\bigg|\mathcal F_{t-1}\right]
+ O(\mathbb E[\|w_t\|_2^3|\mathcal F_{t-1}]) \label{eq:telescope-4}\\
&= \mathbb E\big[\exp\{ia^\top S_{t-1} + a^\top(\Sigma_{t-1}-\Sigma_T)a/2\}\big] + O(\mathbb E[\|w_t\|_2^3|\mathcal F_{t-1}]).\label{eq:telescope-5}
\end{align}
Here, Eq.~(\ref{eq:telescope-1}) holds because $\xi_t|w_t,\mathcal F_{t-1}$ is a centered sub-Gaussian random variable
and $\|a\|_2\leq 1$;
Eq.~(\ref{eq:telescope-2}) holds because $\mathbb E[\xi_t|w_t,\mathcal F_{t-1}]=0$;
Eq.~(\ref{eq:telescope-3}) holds because $\mathbb E[\xi_t^2|w_t,\mathcal F_{t-1}] = \mathbb E[\nu(p_t,x_t;\theta_0)^2|w_t,\mathcal F_{t-1}] = \mathbb E[D_{tt}|\mathcal F_{t-1}]$;
Eq.~(\ref{eq:telescope-4}) holds because $\log(1-s)=-s+O(s^2)$ and the fact that $\|a\|_2,\|w_t\|_2\leq 1$;
Eq.~(\ref{eq:telescope-5}) holds because $\Sigma_t = \Sigma_{t-1} + D_{tt}w_tw_t^\top$ by definition.
Telescoping from $t=1$ to $T$ on both sides of Eq.~(\ref{eq:telescope-5}), we obtain
$$
\mathbb E[\exp\{ia^\top S_T\}] = \mathbb E[\exp\{-a^\top WDW^\top a/2\}] + \mathbb E\left[\sum_{t=1}^T O(\|w_t\|_2^3)\right].
$$
Applying the affine transform $a\mapsto (WDW^\top)^{1/2}a$ we complete the proof of Lemma \ref{lem:levy-continuity}. $\square$

\section{Proof of Lemma \ref{lem:pilot}}
Because $\theta_0\in\Theta$ and $\theta_t^\p$ is the empirical minimizer of $\sum_{t'<t}\rho(d_{t',}p_{t'},x_{t'};\theta)$,
we have the basic inequality that
$$
\sum_{t'<t}\rho(d_{t'},p_{t'},x_{t'};\hat\theta_t^\p) \leq \sum_{t'<t} \rho(d_{t'},p_{t'},x_{t'};\theta_0).
$$
Re-arranging the terms and expanding $\rho(d_t,p_t,x_t;\hat\theta_t^\p)$ into Taylor series at $\theta_0$ with Lagrangian remainders, it holds that
\begin{equation}
\sum_{t'<t} \langle\nabla_\theta\rho(d_{t'},p_{t'},x_{t'};\theta_0),\hat\theta_t^\p-\theta_0\rangle + \frac{1}{2}(\hat\theta_t^\p-\theta_0)^\top\nabla^2_{\theta\theta}\rho(d_{t'},p_{t'},x_{t'};\tilde\theta_{t'})(\hat\theta_t^\p-\theta_0) \leq 0,
\label{eq:basic-ineq}
\end{equation}
where $\tilde\theta_{t'} = \theta_0 + \lambda_{t'}(\hat\theta_t^\p-\theta_0)$ for some $\lambda_{t'}\in(0,1)$.
By Assumption (D1), it holds that $\mathbb E_{x_t}[\nabla_{\theta\theta^\top}^2\rho(d_t,p_t,x_t;\tilde\theta)] \gtrsim \kappa_0 I_d$.
Also note that $\|\nabla^2\rho\|_\op$ is bounded almost surely.
Invoking martingale Azuma-Hoeffding matrix concentration inequalities (see, e.g., Theorem 7.1 of \cite{tropp2012user}) we have that,
for any fixed $\theta\in\mathcal H$ and $\delta'\in(0,1)$, with probability $1-\delta'$,
$$
\left\|\sum_{t'<t}\nabla^2_{\theta\theta^\top}\rho(d_t,p_t,x_t;\tilde\theta_t^\p) - \mathbb E[\nabla^2_{\theta\theta^\top}\rho(d_t,p_t,x_t;\tilde\theta_t^\p)]\right\|_{\op}
\leq O(\sqrt{dt\log(d/\delta')}),
$$
and subsequently (with probability $1-\delta'$ that)
$$
\sum_{t'<t}\nabla^2_{\theta\theta^\top}\rho(d_t,p_t,x_t;\tilde\theta_t^\p) \gtrsim \left(t\kappa_0 - O(\sqrt{dt\log(d/\delta')})\right) I_d \gtrsim \frac{\kappa_0}{2}I
$$
for $t\gtrsim d\log d$.
Plugging the above lower bound into Eq.~(\ref{eq:basic-ineq}) we obtain
\begin{equation}
\|\hat\theta^\p-\theta_0\|_2^2 \leq \frac{O(\kappa_0^{-1})}{t}\times \left[\sum_{t'<t}\langle\nabla_\theta\rho(d_{t'},p_{t'},x_{t'};\theta_0), \hat\theta_t^\p-\theta_0\rangle\right].
\label{eq:basic-ineq-simplified}
\end{equation}
Divide both sides of Eq.~(\ref{eq:basic-ineq-simplified}) by $\|\hat\theta^\p-\theta_0\|_2$.
We have that
\begin{equation}
\|\hat\theta^\p-\theta_0\|_2 \leq \frac{O(\kappa_0^{-1})}{t}\times \sup_{\|\varphi\|_2\leq 1}\bigg|\sum_{t'<t}\langle\nabla_\theta\rho(d_{t'},p_{t'},x_{t'};\theta_0),\varphi\rangle\bigg|.
\label{eq:basic-ineq-normalized}
\end{equation}

Next, construct a covering set $\mathcal H$ of $\{\varphi\in\mathbb R^d:\|\varphi\|_2\leq 1\}$ such that $\sup_{\|\varphi\|_2\leq 1}\min_{\varphi'\in\mathcal H}\|\varphi-\varphi'\|_2\leq\varepsilon$, for some $\varepsilon>0$ to be determined later.
It is a standard result that such a covering set exists with $\log|\mathcal H| = O(d\ln(1/\varepsilon))$ (see, e.g., \cite{geer2000empirical}).
For an arbitrary fixed $\varphi'\in\mathcal H$, the partial sums $S_{t'} := \sum_{t''\leq t'}\langle\nabla_\theta\rho(d_{t''},p_{t''},x_{t''};\theta_0),\varphi'\rangle$
form a \emph{martingale} because $\mathbb E[\nabla_\theta\rho(d_{t''},p_{t''},x_{t''};\theta_0|\mathcal F_{t''-1},p_{t''},x_{t''}] = 0$
thanks to Assumption (C2).
Also, by Assumption (C1) we know that $|\langle\nabla_\theta\rho(d_{t''},p_{t''},x_{t''};\theta_0),\varphi'\rangle|$ is bounded almost surely.
Invoking the Azuma-Hoeffding's inequality we have with probability $1-\delta''$, $\delta''\in(0,1)$, that
$$
\left|\sum_{t'<t}\langle\nabla_\theta\rho(d_{t'},p_{t'},x_{t'};\theta_0),\varphi'\rangle\right| \leq O(\sqrt{t\ln(1/\delta'')}).
$$
Using a union bound over all $\varphi'\in\mathcal H$ and the approximation of $\mathcal H$, we have with probability $1-\delta''$ that
$$
\sup_{\|\varphi\|_2\leq 1}\left|\sum_{t'<t}\langle\nabla_\theta\rho(d_{t'},p_{t'},x_{t'};\theta_0),\varphi\rangle\right| \leq O(\sqrt{dt\log(1/\varepsilon\delta'')}) + O(\varepsilon t).
$$
Setting $\varepsilon = 1/t$ and plugging the above inequality into Eq.~(\ref{eq:basic-ineq-normalized}), we obtain with probability $1-O(t^{-1})$ that
$$
\|\hat\theta_t^\p-\theta_0\|_2 \leq\frac{O(\kappa_0^{-1})}{t}\times O(\sqrt{dt\log t}),
$$
which is to be demonstrated. $\square$

\section{Proof of Corollary \ref{cor:predict-asymptotic-distribution}}
In Lemma \ref{lem:pilot} we have established that $\|\hat\theta^\p-\theta_0\|_2=O_P(\sqrt{d\log T/T})$.
In this proof we shall prove that $\|\hat\theta^\d-\theta_0\|_2 = O_P(\sqrt{d\log T/T})$ as well.
Invoking the bias-variance decomposition in Lemma \ref{lem:debias-basic}
and Lemma \ref{lem:diff-iwg} that $\|I-WG\|_\op\overset{p}{\to} 0$, it suffices to prove that $\|W\vct{\xi}\|_2 = O_P(\sqrt{d\log T/T})$,
where $\hat{\vct\xi} = (\xi_1,\cdots,\xi_T)$.
Since $\vct{\xi}$ are bounded, centered random variables and $\{\sum_{t'<t}\xi_{t'}w_{t'}\}_t$ forms a martingale,
it suffices to prove that $\sqrt{\tr(WW^\top)}= O_P(d\log T/T)$,
or more specifically $\|W\|_\op=O_P(\sqrt{\log T/T})$ because $\tr(WW^\top) = \|W\|_F^2 \leq d\|W\|_\op^2$
where $d$ is a constant.

Recall that $\|I-WG\|_\op \overset{p}{\to} 0$ and therefore $\|WG\|_\op \overset{p}{\to} 1$.
By Assumption (D1) and the martingale matrix concentration inequalities, $\sigma_d(G) = \Omega_P(\sqrt{T})$.
Subsequently, $\|W\|_\op \leq \|WG\|_\op / \sigma_d(G) = O_P(1/\sqrt{T})$.
Therefore, $\|\hat\theta^\d-\theta_0\|_2 = O_P(\sqrt{d\log T/T})$.

By Theorem \ref{thm:asymptotic-distribution} we know that $(WDW^\top)^{-1/2}(\hat\theta^\d-\theta_0)\overset{d}{\to}\mathcal N(0,I_d)$.
Now consider $\hat D=\diag(\hat{\vct\nu})^2\in\mathbb R^{T\times T}$, where $\hat{\vct\nu}_t = \nu(p_t,x_t;\hat\theta^\p)$.
Note that $\hat\theta^\p\overset{p}{\to}\theta_0$ and $\nu(p,x;\cdot)$ is Lipschitz continuous, thanks to Assumption (B2).
Therefore, $\hat{\vct\nu}_t\overset{p}{\to}\vct\nu_t$ for all $t$,
and subsequently $(W\hat DW^\top)^{-1/2}(\hat\theta^\d-\theta_0)\overset{d}{\to}\mathcal N(0,I_d)$
because $D\gtrsim\Omega(1)\times I_{T\times T}$ thanks to Assumption (B2) which assumes $\inf\nu(p,x,\theta)>0$.


Next, expanding $f(p,x;\hat\theta^\d)-f(p,x;\theta_0)$ in Taylor expansion, we have that
$f(p,x;\hat\theta^\d)-f(p,x;\theta_0) = \langle\nabla f(p,x;\hat\theta^\p), \hat\theta^\d-\theta_0\rangle + O(\Delta^2)$
where $\Delta = \max\{\|\hat\theta^\p-\theta_0\|_2,\|\hat\theta^\d-\theta_0\|_2\}$.
Hence, $f(p,x;\hat\theta^\d)-f(p,x;\theta_0)$ can be decomposed as $f(p,x;\hat\theta^\d)-f(p,x;\theta_0)=b+v$
where $v/\sqrt{\nabla f(p,x;\hat\theta^\p)WDW^\top\nabla f(p,x;\hat\theta^\p)} \overset{p}{\to} 0$
and $|b|\leq O(\Delta^2)$.
Finally, note that $\hat\sigma_{px}^\d = \sqrt{\nabla f(p,x;\hat\theta^\p)WDW^\top\nabla f(p,x;\hat\theta^\p)} = \Omega_P(1/\sqrt{T})$
and $|b|\leq O(\Delta^2) = O(d\log T/T)$, implying that $|b|/\hat\sigma_{px}^\d \overset{p}{\to} 0$.
We have thus established that
$$
(f(p,x;\hat\theta^\d)-f(p,x;\theta_0))/\hat\sigma_{px}^\d \overset{d}{\to} \mathcal N(0,1),
$$
which is to be demonstrated. $\square$

\section{Proof of Corollary \ref{cor:condition-main}}
The first two conditions of Theorem \ref{thm:asymptotic-distribution} clearly holds
according to the whitening procedure listed in Algorithm \ref{alg:whitening} and the appropriate scaling of $\eta$.
Hence, in this proof we only focus on the third condition in Theorem \ref{thm:asymptotic-distribution}.

We first establish a lower bound of $\lambda_{\min}(WDW^\top)$.
Because $\|I-WG\|_\op\overset{p}{\to}0$, we have with probability $\to 0$
that $\sigma_{d}(W) \geq \sigma_{d}(WG)/\|G\|_\op \geq \Omega(1/\|G\|_\op)$.
By Assumption (A2), it holds that $\|G\|_\op = O(\sqrt{T})$.
Subsequently, $\sigma_d(W) \geq \Omega(1/\sqrt{T})$.
Noting also that $\lambda_{\min}(D) = \Omega(1)$ thanks to Assumption (B2) that $\inf\nu^2(p,x;\theta_0)>0$,
we have that $\lambda_{\min}(WDW^\top) = \Omega(1/T)$ and hence
$$
\sqrt{\lambda_{\min}(WDW^\top)} = \Omega(1/\sqrt{T}).
$$

Next, from Lemmas \ref{lem:diff-iwg} and \ref{lem:pilot}, we have that $\|I-WG\|_\op=O_P(\eta\sqrt{T})$
and $\|\hat\theta^p-\theta_0\|_2 = O_P(\sqrt{d\log T/T})$.
Subsequently, with probability $\to 1$,
\begin{align*}
\frac{\max\{\|I-WG\|_\op\|\hat\theta^\p-\theta_0\|_2, \|\hat\theta^\p-\theta_0\|_2^2\}}{\sqrt{\lambda_{\min}(WDW^\top)}}
\leq \frac{O(\eta\sqrt{d\log T} + d\log T/T)}{\Omega(1/\sqrt{T})}\to 0
\end{align*}
provided that $\eta T^{1/2+\delta}\to 0$ for some $\delta>0$.
This completes the proof of Corollary \ref{cor:condition-main}. $\square$

\section{Proof of Theorem \ref{thm:uniform-validity}}

Corollary \ref{cor:predict-asymptotic-distribution} establishes that $(W\hat DW^\top)^{-1/2}(\hat\theta^\d-\theta_0)\overset{d}{\to}\mathcal N(0,I_d)$,
where $\hat D=\diag(\hat{\vct\nu})^2\in\mathbb R^{T\times T}$ and $\hat{\vct\nu}_t = \nu(p_t,x_t;\hat\theta^\p)$.
Let $\mathbb P_M:=\frac{1}{M}\sum_{m=1}^M\vct 1\{\cdot=\zeta_m\}$ be the empirical distribution of the $M$ Monte-Carlo samples.
It is easy to see that $\mathbb P_M\overset{p}{\to}  \mathcal N(0, W\hat DW^\top)$ as $M\to\infty$, using the weak law of large numbers.

Now consider an arbitrary pair of $p\in[p_{\min},p_{\max}]$ and $x\in\mathcal X$.
The proof of Theorem \ref{thm:pointwise-validity} also establishes that $f(p,x;\hat\theta^\d)-f(p,x;\theta_0) = b(p,x)+\langle\nabla f(p,x;\hat\theta^\p), \hat\theta^\d-\theta_0\rangle$
where $|b(p,x)|\leq O_P({d\log T/T})$. Therefore, taking the supreme over $p\in[p_{\min},p_{\max}]$ and $x\in\mathcal X$ we have that
\begin{equation}
\sup_{p,x}\big|f(p,x;\hat\theta^\d)-f(p,x;\theta_0)\big| = \sup_{p,x}\big|\langle\nabla f(p,x;\hat\theta^\p),\hat\theta^\d-\theta_0\rangle\big| + b
\label{eq:uniform-ci-1}
\end{equation}
where $|b|\leq O_P(d\log T/T)$.

Let $Q_\alpha^*$ be the (1-$\alpha$)-quantile of the distribution $\sup_{p,x}|\langle\nabla f(p,x;\hat\theta^\p),\zeta\rangle|$
where $\zeta\sim\mathcal N(0,W\hat DW^\top)$.
Because $(W\hat DW^\top)^{-1/2}(\hat\theta^\d-\theta_0)\overset{d}{\to}\mathcal N(0,I)$,
we have that
$$
\lim_{T\to\infty} \Pr\left[\sup_{p,x}\big|\langle\nabla f(p,x;\hat\theta^\p),\hat\theta^\d-\theta_0\rangle\big|\leq Q_\alpha^*\right] = 1-\alpha.
$$
In addition, because $\mathbb P_M\overset{p}{\to}\mathcal N(0,W\hat DW^\top)$, we have that
\begin{equation}
\lim_{T\to\infty}\lim_{M\to\infty} \Pr\left[\sup_{p,x}\big|\langle\nabla f(p,x;\hat\theta^\p),\hat\theta^\d-\theta_0\rangle\big|\leq \hat Q_\alpha\right] = 1-\alpha,
\label{eq:uniform-ci-2}
\end{equation}
where $\hat Q_\alpha$ is the (1-$\alpha$)-quantile of the distribution $\sup_{p,x}|\langle\nabla f(p,x;\hat\theta^\p),\zeta'\rangle|$
where $\zeta'\sim\mathbb P_M$.
Combining Eqs.~(\ref{eq:uniform-ci-1},\ref{eq:uniform-ci-2}) and noting that $|b|/\hat Q_\alpha\overset{p}{\to} 0$
we proved the conclusion of Theorem \ref{thm:uniform-validity}. $\square$







\end{document}